\documentclass[opre,nonblindrev]{informs3_SSRN}
\RequirePackage{tgtermes}
\RequirePackage{newtxtext}
\RequirePackage{newtxmath}
\RequirePackage{bm}
\RequirePackage{endnotes}

\OneAndAHalfSpacedXI

\usepackage{amsmath}
\usepackage{amssymb}

\usepackage{indentfirst}
\usepackage[hidelinks,hypertexnames=false]{hyperref}
\usepackage{paralist}
\usepackage{geometry}
\usepackage{color}
\usepackage{multimedia}
\usepackage{multicol}
\usepackage{bbm}
\usepackage{graphics}
\usepackage{graphicx}
\usepackage{booktabs}
\usepackage{subcaption}
\usepackage{grffile}
\usepackage{float}
\usepackage{url}

\usepackage{supertabular}
\usepackage{rotating}
\usepackage{pdflscape}
\usepackage{verbatim}
\usepackage{listings}
\usepackage{xcolor}
\usepackage[shortlabels]{enumitem}
\usepackage{comment}
\usepackage{epstopdf}
\usepackage{algorithm}
\usepackage{algorithmic}

\usepackage{pifont}

\renewcommand{\bm}[1]{\boldsymbol{#1}}
\newcommand{\dd}{\mathrm{d}}
\newcommand{\one}{\mathbf{1}}
\newcommand{\E}{\mathbb{E}}
\newcommand{\diag}{\operatorname{diag}}

\newcommand{\cmark}{\ding{51}}%
\usepackage{natbib}
\bibpunct[, ]{(}{)}{,}{a}{}{,}
\def\bibfont{\small}

\EquationsNumberedThrough    

\TheoremsNumberedThrough     
\ECRepeatTheorems

\MANUSCRIPTNO{}

\begin{document}


\RUNAUTHOR{Jia and Ouyang}

\RUNTITLE{Convergence and Regret of the Policy Gradient for MAB in Diffusion Environment}

\TITLE{Convergence and Regret of the Policy Gradient for Multi-Armed Bandits in Diffusion Environment}

\ARTICLEAUTHORS{%
\AUTHOR{Yanwei Jia}
\AFF{Department of Systems Engineering and Engineering Management, The Chinese University of Hong Kong, Shatin, New Territories, Hong Kong SAR, \EMAIL{yanweijia@cuhk.edu.hk}}

\AUTHOR{Du Ouyang}
\AFF{Department of Mathematical Sciences, Tsinghua University, Beijing, Beijing 100084, China, \EMAIL{duouyang99@outlook.com}}
} 

\ABSTRACT{%
This paper studies the policy gradient update for a multi-arm bandit problem in diffusion environment that is described by a stochastic differential equation (SDE) under the continuous-time reinforcement learning framework by \citet{wang2020reinforcement,jia2022policypgac}. With the logit parameterization for the stochastic policy, we show that it converges almost surely to the optimal arm under an arbitrary constant learning rate. Furthermore, we derive the non-asymptotic regret upper bound when the constant learning rate is below a time-invariant threshold; and the regret bound has order $O(\log T)$. We improve the analysis in \citet{lattimore2026diffusion} for the same SDE by constructing a novel Lyapunov function and demonstrate the transparency of analyzing policy gradient using the tools in SDEs. In addition, the same Lyapunov function is also helpful in analyzing the discrete-time policy gradient algorithm.
}%
\KEYWORDS{Multi-armed bandits, continuous-time reinforcement learning, policy gradient, SDEs, regret and convergence}

\maketitle

\section{Introduction}
\label{sec:intro}

The bandit problem (cf. \citealt{lattimore2020bandit}) is a classical model to describe a decision maker who sequentially selects one of multiple ``arms", collecting the associated random reward, and aims to maximize the expected reward. 
Predominantly, two classes of algorithms based on statistical principles have been extensively studied: upper confidence bound algorithm and Thompson sampling. 
By contrast, little attention has been paid to the gradient method, a generally applicable optimization principle, in bandit problems. Until recently, there is growing theoretical interest in understanding the convergence and regret of policy gradient algorithm in bandit problems, see, e.g., \citet{walton2023regret,mei2023stochastic,mei2024small,baudry2025does,lattimore2026diffusion,lattimore2026lyapunov}.

In this paper, we examine the policy gradient method for the multi-armed bandit (MAB) problem in a diffusion environment. Intuitively speaking, this is the most difficult situation for learning because incrementally, the noise is much larger than the signal (referred to as the ``weak signal regime" in \citealt{kuang2024weak}). We work under the continuous-time reinforcement framework by \citet{wang2020reinforcement}. With a commonly used logit parameterization for the choice probability of arm $a\in \mathcal A = \{1,\cdots,d\}$ given by 
\[
\pi^{(a)}(\bm\phi)
=
\frac{\exp(\phi^{(a)})}
{\sum_{j=1}^d\exp(\phi^{(j)})}, \text{ with }
\bm\pi(\bm\phi)
=
(\pi^{(1)}(\bm\phi),\ldots,\pi^{(d)}(\bm\phi))^\top\in \mathcal P^d,\ \bm\phi=(\phi^{(1)},\ldots,\phi^{(d)})^\top\in \mathbb R^d ,\]
where $ \mathcal P^d
:=
\left\{
\bm p=(p^{(1)},\ldots,p^{(d)})^\top\in[0,1]^d:
\sum_{a=1}^d p^{(a)}=1
\right\}$. \citet{jia2022policypgac} suggest the resulting online, incremental (actor-critic) policy gradient update can be informally described by (see E-Companion \ref{sec:problem-formulation} for the introduction) 
\begin{equation}
\begin{aligned}
\dd\beta_t=&
\alpha_t\bigl(\dd R_t^{(A_t)}-\beta_t\,\dd t\bigr), \\
\dd\bm\phi_t
= &
\ell_t
\nabla_{\bm\phi}\log\pi^{(A_t)}(\bm\phi_t)
\bigl(\dd R_t^{(A_t)}-\beta_t\,\dd t\bigr) = \ell_t \sum_{a=1}^d \one_{\{ A_t = a\}} (\bm e_{a} - \bm\pi(\bm\phi_t))  (\dd R^{(a)}_t - \beta_t\dd t) ,    
\end{aligned}
\label{eq:mab-continuous-actor-critic}
\end{equation}
where $\bm e_a = (0,\cdots,0,1,0,\cdots,0)^\top\in\mathbb R^d$ stands for the unit vector with $a$-th entry 1; $A_t\sim \bm\pi(\bm\phi_t)$ is a random draw; $\dd R^{(A_t)}_t = \mu^{(A_t)}\dd t + \sigma^{(A_t)}\dd B^{(A_t)}_t$ is the selected arm and its associated instantaneous reward; and $\bm\mu = (\mu^{(1)},\cdots,\mu^{(d)})^\top \in \mathbb R^d,\bm\sigma = (\sigma^{(1)},\cdots,\sigma^{(d)})^\top \in \mathbb R^d_{++}$ are the mean and volatility of the reward rate of the each arm. Here, $\alpha_t$ and $\ell_t$ are the critic and actor learning rates respectively. 

With a constant learning rate for the policy $\ell_t\equiv \ell$ and under high-frequency sampling limit, \citet{jia2026accuracy} show that the distribution of $\bm\phi_t$ process can be described by the following well-posed stochastic differential equation (SDE):\footnote{SDE \eqref{eq:prelim-aggregated-sde} can also be viewed as the diffusion limit of the discrete-time policy gradient algorithm in the sense of \citet{fan2021diffusion,kuang2024weak}, see E-Companion \ref{subsec:mab-discrete-implementation}.}
\begin{equation}
\dd\bm\phi_t
=
\ell\bm J(\bm \pi_t)\bm\mu\,\dd t
+
\ell\bm G_{\bm\sigma}(\bm\pi_t)\,\dd\bm B_t^{\phi},
\qquad
\bm\pi_t=\bm\pi(\bm\phi_t),
\label{eq:prelim-aggregated-sde}
\end{equation}
where $\bm B^\phi$ is a standard $d$-dimensional Brownian motion, and $  \bm J(\bm\pi_t)
=
\diag\{\bm\pi_t\}-\bm\pi_t\bm\pi_t^\top\in \mathbb S^{d}_+$, and $
\bm G_{\bm\sigma}(\bm\pi)
= (\bm I-\bm\pi\bm e^\top)
\diag\left\{
\sigma^{(1)}\sqrt{\pi^{(1)}},\ldots,
\sigma^{(d)}\sqrt{\pi^{(d)}}
\right\} \in \mathbb R^{d\times d}$, $ \bm e : = (1,\dots,1)^{\top}\in \mathbb R^d$ 

To understand the behavior of the policy gradient update, it suffices to examine the property of SDE \eqref{eq:prelim-aggregated-sde}. In this paper, we prove that, the policy gradient update \eqref{eq:prelim-aggregated-sde} almost surely converges to the best arm under any arbitrary constant learning rate, and achieved $O(\log T)$ order (instance-dependent) regret upper bound when the learning rate is below a threshold that is determined by the gap in means of each arm and their volatilities. Under this condition, our regret upper bound holds for any finite time $T$. The key analytical tool is to construct a suitable Lyapunov function that induces stabilizing behavior of the underlying process. With the help of It\^o's calculus, such verification becomes much easier. Furthermore, it turns out the same Lyapunov function is also helpful in analyzing the conventional discrete-time policy gradient algorithm.  

The problem we attacked in this paper has recently been studied in \citet{lattimore2026diffusion}, who obtained the similar regret upper bound and a regret lower bound (both of $O(\log T)$ order) by analyzing the same SDE using a different method. We relax the conditions on the learning rate in \citet{lattimore2026diffusion} and give a straightforward, unifying proof for both two-arm and multi-arm cases, and it is valid in both continuous and discrete time environment. Our method is also related to other recent analysis of policy gradient for MAB in the discrete time, such as \citet{walton2023regret,mei2023stochastic,mei2024small,baudry2025does,lattimore2026lyapunov}. In particular, our results on the threshold of the learning rate to ensure logarithmic regret coincide with the conjecture on the maximum learning rate proposed in \citet{baudry2025does}, which scales in $O(\frac{1}{d})$, where $d$ is the number of arms. Table \ref{tab:intro-pg-comparison} gives a comparison of the main results between this paper and these literature.

\begin{table}[h]
\centering
\caption{\textbf{Comparison of results on the policy-gradient for stochastic multi-armed bandits.} The column ``Learning rate" summarizes the condition on the learning rate $ \ell$, where $ d $ is the number of arms, $ T$ and $ N$ are the time horizon for continuous-time and discrete-time systems, respectively. The columns ``a.s. conv." and ``Logarithmic regret" stands for the conclusion of the almost sure convergence and logarithmic expected regret, where ``N/A" means no conclusions. }
\label{tab:intro-pg-comparison}

\scriptsize
\setlength{\tabcolsep}{3pt}
\renewcommand{\arraystretch}{1.02}

\begin{tabular}{@{}
>{\raggedright\arraybackslash}p{0.18\textwidth}
>{\raggedright\arraybackslash}p{0.18\textwidth}
>{\raggedright\arraybackslash}p{0.10\textwidth}
>{\raggedright\arraybackslash}p{0.17\textwidth}
>{\raggedright\arraybackslash}p{0.29\textwidth}
@{}}

\toprule
Literature
& Learning rate
& a.s.\ conv.
& Logarithmic regret 
& Remark
\\
\midrule

\multicolumn{4}{@{}l}{\textit{Panel A: Continuous time}}
\\

\citet{lattimore2026diffusion}
& $\ell \le O(1/\log T)$
& N/A
& \cmark
& Non-asymptotic
\\

\addlinespace[0.45ex]

\textbf{This paper}
& $ \ell \le O(1/d)$
& \cmark
& \cmark
& Non-asymptotic
\\

\addlinespace[0.45ex]

& Any constant $\ell$
& \cmark
& N/A
&
\\

\midrule

\multicolumn{4}{@{}l}{\textit{Panel B: Discrete time}}
\\

\citet{walton2023regret}
& State-dependent
& \cmark
& \cmark
& Non-asymptotic
\\

\addlinespace[0.45ex]

\citet{mei2023stochastic}
& $\ell = O(1/d^{3/2})$
& \cmark
& \cmark
& Nonexplicit constant in regret bound
\\

\addlinespace[0.45ex]

\citet{mei2024small}
& Any constant $\ell$
& \cmark
& N/A
& Pathwise asymptotic $O(\log N)$ 
\\

\addlinespace[0.45ex]

\citet{baudry2025does}
& Small constant $\ell$
& N/A
& \cmark
& Two-arm
\\

\addlinespace[0.45ex]

\citet{lattimore2026lyapunov}
& $\ell \le O(1/\log (dN))$
& N/A
& \cmark
& Non-asymptotic
\\

\addlinespace[0.45ex]

\textbf{This paper}
& $\ell \le O(1/d)$
& \cmark
& \cmark
& Non-asymptotic
\\

\bottomrule

\end{tabular}

\end{table}

In the following, Section \ref{sec:main-results} shows the main results on the regret and almost sure convergence of this SDE. The key construction of the Lyapunov function and the proof is presented. Section \ref{sec:conclusion} makes concluding remarks. E-Companion contains the background of SDE \eqref{eq:prelim-aggregated-sde} and explains where it comes from. We also provide more intuitive illustrations in E-Companion.

\section{Main Results}
\label{sec:main-results}
Unlike \citet{lattimore2026diffusion} who analyzes the two-arm and multi-arm cases separately, we give a unifying account. The special case of two-arm is illustrated in E-Companion \ref{sec:two arm} to highlight several intuitions of the SDE \eqref{eq:prelim-aggregated-sde} and the role of learning rate $\ell$.

\subsection{Logarithmic regret under small learning rate}
\label{subsubsec:mab-heterogeneous-volatilities}

We first establish a finite-time logarithmic expected-regret bound based on SDE \eqref{eq:prelim-aggregated-sde}.  Assume
throughout this subsection that arm $1$ is the unique optimal arm, and put $ \Delta_i=\mu^{(1)}-\mu^{(i)}>0,\
\Delta_{\min}=\min_{2\leq i\leq d}\Delta_i,\
\Delta_{\max}=\max_{2\leq i\leq d}\Delta_i.$
Only the optimal arm is required to be unique; the suboptimal arms may have
equal means.  Define the instantaneous regret rate and its expected cumulative
version by $ r_t
=
\mu^{(1)}-\bm\mu^\top\bm\pi_t
=
\sum_{i=2}^d\Delta_i\pi_t^{(i)}$, and $\mathcal R_T
=
\E\left[\int_0^T r_t\,\dd t\right]$, respectively.
Denote $ s_a:=\sigma^{(a)^2},
\
\overline s:=\max_{1\leq a\leq d}s_a,\
s_{ab}:=s_a\vee s_b,\ 1\leq a<b\leq d.$ Furthermore, denote
$
\bm Q_{\bm\sigma}(\bm\pi) =\bm G_{\bm\sigma}\bm G_{\bm\sigma}^\top
= 
(\bm I-\bm\pi\bm e^\top)
\diag\{\pi^{(1)} \sigma^{(1)^2},\ldots,\pi^{(d)}\sigma^{(d)^2}\}
(\bm I-\bm\pi\bm e^\top)^\top \in \mathbb S^d_+$.

\begin{lemma}
\label{lem:mab-wellposedness}
For every finite deterministic initial condition $\bm\phi_0$, the SDE
\eqref{eq:prelim-aggregated-sde} has a unique nonexplosive strong solution.
Moreover, $\bm e^\top\bm\phi_t=\bm e^\top\bm\phi_0$ for all $t\geq 0$ almost surely.
\end{lemma}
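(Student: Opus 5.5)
The plan is to apply the standard existence-and-uniqueness theorem for SDEs with globally Lipschitz and bounded coefficients. The coefficients $\bm b(\bm\phi):=\ell\bm J(\bm\pi(\bm\phi))\bm\mu$ and $\bm\Sigma(\bm\phi):=\ell\bm G_{\bm\sigma}(\bm\pi(\bm\phi))$ depend on $\bm\phi$ only through $\bm\pi(\bm\phi)\in\mathcal P^d$. Since every entry of $\bm\pi$ lies in $[0,1]$, both coefficients are bounded uniformly in $\bm\phi$, with bounds in terms of $\ell$, $\max_a|\mu^{(a)}|$ and $\overline s$. Boundedness of the coefficients already rules out explosion: $|\bm\phi_t-\bm\phi_0|$ is controlled by $Ct$ plus a martingale with bounded quadratic variation, so $\E\sup_{s\le t}|\bm\phi_s|^2<\infty$ for every $t$.

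\textbf{Lipschitz drift.} The softmax map $\bm\phi\mapsto\bm\pi(\bm\phi)$ is smooth with Jacobian $\bm J(\bm\pi)$, whose entries are bounded by $1$. Hence $\bm\pi$ is globally Lipschitz on $\mathbb R^d$. The drift is a polynomial in $\bm\pi$ restricted to the compact set $\mathcal P^d$, so it is globally Lipschitz in $\bm\phi$ as well.

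\textbf{Diffusion coefficient (main obstacle).} The entries $\sigma^{(a)}\sqrt{\pi^{(a)}}$ contain a square root, and $\sqrt{\cdot}$ is not Lipschitz at $0$. Two observations resolve this.
\begin{itemize}
\item $\sqrt{\pi^{(a)}(\bm\phi)}=\exp(\phi^{(a)}/2)/\bigl(\sum_j\exp(\phi^{(j)})\bigr)^{1/2}$ is itself smooth in $\bm\phi$.
\item Its gradient is $\tfrac12\sqrt{\pi^{(a)}}\,(\bm e_a-\bm\pi)$. Indeed, $\nabla\log\sqrt{\pi^{(a)}}=\tfrac12(\bm e_a-\bm\pi)$, so $\nabla\sqrt{\pi^{(a)}}=\tfrac12\sqrt{\pi^{(a)}}\,(\bm e_a-\bm\pi)$, and this is bounded in norm by a constant.
\end{itemize}
Thus $\bm\phi\mapsto\sqrt{\pi^{(a)}(\bm\phi)}$ is globally Lipschitz, because the problematic point $\pi^{(a)}=0$ is never attained at finite $\bm\phi$ and the gradient stays bounded as $\bm\phi$ approaches it.

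Each entry of $\bm G_{\bm\sigma}$ is a product of bounded Lipschitz factors: entries of $\bm I-\bm\pi\bm e^\top$ times $\sigma^{(a)}\sqrt{\pi^{(a)}}$. A product of bounded Lipschitz functions is Lipschitz, so $\bm\Sigma$ is globally Lipschitz. The classical Itô theorem then gives a unique strong solution on $[0,\infty)$ for every deterministic $\bm\phi_0$.

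\textbf{Conservation of $\bm e^\top\bm\phi_t$.} Compute $\dd(\bm e^\top\bm\phi_t)=\ell\,\bm e^\top\bm J(\bm\pi_t)\bm\mu\,\dd t+\ell\,\bm e^\top\bm G_{\bm\sigma}(\bm\pi_t)\,\dd\bm B^\phi_t$.
\begin{itemize}
\item Drift: $\bm e^\top\bm J(\bm\pi)=\bm\pi^\top-(\bm e^\top\bm\pi)\bm\pi^\top=\bm 0^\top$, since $\bm e^\top\bm\pi=1$.
\item Diffusion: $\bm e^\top(\bm I-\bm\pi\bm e^\top)=\bm e^\top-\bm e^\top=\bm 0^\top$, again using $\bm e^\top\bm\pi=1$.
\end{itemize}
Both terms vanish identically, so $\bm e^\top\bm\phi_t=\bm e^\top\bm\phi_0$ for all $t$ almost surely.
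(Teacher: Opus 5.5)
Your proposal is correct and follows the same route as the paper. The coefficients are bounded and globally Lipschitz, so the standard strong existence and uniqueness theorem applies, and conservation of $\bm e^\top\bm\phi_t$ follows from $\bm e^\top\bm J(\bm\pi)=\bm 0^\top$ and $\bm e^\top\bm G_{\bm\sigma}(\bm\pi)=\bm 0^\top$. Your identity $\nabla_{\bm\phi}\sqrt{\pi^{(a)}}=\tfrac12\sqrt{\pi^{(a)}}\,(\bm e_a-\bm\pi)$ is a useful addition: it justifies the Lipschitz property of the square-root entries of $\bm G_{\bm\sigma}$, which the paper only asserts.
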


\begin{proof}{Proof.}
For every finite $\bm\phi$, all softmax probabilities are strictly positive.
The drift and volatility of SDE \eqref{eq:prelim-aggregated-sde} are global Lipschitz functions of
$\bm\phi$.  They are also bounded because $\bm\pi$ takes values in the
probability simplex and $ \|\bm G_{\bm\sigma}(\bm\pi)\|_{\mathrm F}^2
=
\operatorname{tr}\bigl(\bm Q_{\bm\sigma}(\bm\pi)\bigr)
\leq
\overline s\sum_{a=1}^d
\pi^{(a)}\|\bm e_a-\bm\pi\|_2^2
=
\overline s\,\operatorname{tr}(\bm J)
\leq\overline s.$
Standard SDE theory (e.g., \citealt[Chapter 5, Theorem 2.5]{karatzas1991brownian}) therefore gives a unique global strong solution.
Furthermore, by direction calculation, $\bm e^\top\bm J=\bm0^\top$ and
$\bm e^\top\bm G_{\bm\sigma}=\bm0^\top$.  Multiplying
\eqref{eq:prelim-aggregated-sde} by $\bm e^\top$ and integrating from 0 to $t$ gives
the desired result. \Halmos
\end{proof}

Next, we shall repeatedly use the elementary identity
\begin{equation}
\bm v^\top\bm J(\bm\pi)\bm w
=
\sum_{1\leq a<b\leq d}
\pi^{(a)}\pi^{(b)}
(v_a-v_b)(w_a-w_b),
\qquad
\bm v,\bm w\in\mathbb R^d.
\label{eq:mab-pairwise-J-identity}
\end{equation}
To see this, expanding the right-hand side as one half of the corresponding double
sum gives
\[
\sum_{a=1}^d\pi^{(a)}v_aw_a
-
\left(\sum_{a=1}^d\pi^{(a)}v_a\right)
\left(\sum_{b=1}^d\pi^{(b)}w_b\right)
=\bm v^\top\bm J(\bm\pi)\bm w.
\]

Let $I=\{2,\ldots,d\}$ be the set of suboptimal arm index. For every nonempty $A\subseteq I$, define
\begin{equation}
h_A(\bm\phi)
=
\prod_{i\in A}e^{\phi^{(i)}-\phi^{(1)}}
=
\prod_{i\in A}\frac{\pi^{(i)}}{\pi^{(1)}}.
\label{eq:mab-subset-monomial}
\end{equation}

We first record an estimate on the covariance matrix useful for verifying Lyapunov function.
\begin{lemma}
\label{lem:mab-heterogeneous-pairwise-covariance}
For every probability vector $\bm\pi\in \mathcal P^d$ and every $\bm v\in\mathbb R^d$,
\begin{equation}
\bm v^\top\bm Q_{\bm\sigma}(\bm\pi)\bm v
\le
\sum_{1\le a<b\le d}
\pi^{(a)}\pi^{(b)}s_{ab}(v_a-v_b)^2.
\label{eq:mab-heterogeneous-pairwise-covariance-bound}
\end{equation}
In particular,
\begin{equation}
\bm Q_{\bm\sigma}(\bm\pi)
\preceq \overline s\,\bm J(\bm\pi).
\label{eq:mab-heterogeneous-Loewner-bound}
\end{equation}
\end{lemma}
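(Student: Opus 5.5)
Set $\bm u := (\bm I-\bm\pi\bm e^\top)^\top\bm v = \bm v-(\bm\pi^\top\bm v)\bm e$, so that $u_a = v_a-\bar v$ with $\bar v:=\sum_b\pi^{(b)}v_b$. By the definition of $\bm Q_{\bm\sigma}$,
\[
\bm v^\top\bm Q_{\bm\sigma}(\bm\pi)\bm v=\sum_{a=1}^d \pi^{(a)}s_a\,(v_a-\bar v)^2 .
\]
This is a weighted variance with heterogeneous weights $s_a$. The plan is to rewrite it so that each weight $s_a$ is attached only to pairs $(a,b)$, and then dominate it by $s_{ab}=s_a\vee s_b$.

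\textbf{Key step.} Write $v_a-\bar v=\sum_b\pi^{(b)}(v_a-v_b)$. Because $\pi^{(\cdot)}$ is a probability vector, Jensen (Cauchy--Schwarz) gives
\[
(v_a-\bar v)^2\le\sum_b\pi^{(b)}(v_a-v_b)^2 .
\]
Multiplying by $\pi^{(a)}s_a$ and summing over $a$ yields
\[
\bm v^\top\bm Q_{\bm\sigma}\bm v\le\sum_{a,b}\pi^{(a)}\pi^{(b)}s_a(v_a-v_b)^2 .
\]
This bound is lossy by a factor of two, so it is not the right route. To get the sharp form, I would instead symmetrize: in the double sum $\sum_{a\neq b}\pi^{(a)}\pi^{(b)}s_a(\cdots)$, the unordered pair $\{a,b\}$ receives weight $(s_a+s_b)$, which can be as large as $2s_{ab}$.

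The cleaner route therefore uses the sharper identity $\sum_a\pi^{(a)}(v_a-\bar v)^2=\sum_{a<b}\pi^{(a)}\pi^{(b)}(v_a-v_b)^2$ from \eqref{eq:mab-pairwise-J-identity}, combined with a reduction to the homogeneous case. Order the arms so that $s_{(1)}\le\cdots\le s_{(d)}$. Decompose the weights as a layer-cake,
\[
s_a=\sum_k (t_k-t_{k-1})\mathbf 1\{s_a\ge t_k\},
\]
with thresholds $0=t_0<t_1<\cdots$. Then $\bm v^\top\bm Q\bm v=\sum_k(t_k-t_{k-1})\sum_{a\in S_k}\pi^{(a)}(v_a-\bar v)^2$, where $S_k=\{a:s_a\ge t_k\}$. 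It therefore suffices to show, for each subset $S$,
\[
\sum_{a\in S}\pi^{(a)}(v_a-\bar v)^2\le\sum_{a<b,\ \{a,b\}\cap S\ne\emptyset}\pi^{(a)}\pi^{(b)}(v_a-v_b)^2 .
\]
Summing these over the layers gives exactly the coefficient $\max(s_a,s_b)=s_{ab}$ on each pair, since the pair $\{a,b\}$ meets $S_k$ precisely when $t_k\le s_a\vee s_b$.

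\textbf{Main obstacle.} The remaining work is the subset inequality above, which I expect to be the main obstacle. For it, I would expand $\pi^{(a)}(v_a-\bar v)^2$ using $v_a-\bar v=\sum_b\pi^{(b)}(v_a-v_b)$ and view the right side minus the left side as a quadratic form in $\bm v$. Pairs with both endpoints outside $S$ are absent, and the terms indexed by $a\in S$ expand to
\[
\pi^{(a)}\sum_{b,c}\pi^{(b)}\pi^{(c)}(v_a-v_b)(v_a-v_c).
\]
One could then try to verify nonnegativity via $\bar v$ being a minimizer, namely $\sum_{a\in S}\pi^{(a)}(v_a-\bar v)^2\le\sum_{a\in S}\pi^{(a)}(v_a-c)^2$ for a suitable $c$. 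That inequality is false in general, however, since $\bar v$ minimizes the full sum and not the partial one. So I would instead directly show that the difference equals
\[
\sum_{a<b\notin S}\pi^{(a)}\pi^{(b)}(\dots)+\text{(a nonnegative square)},
\]
that is, use the complement $S^c$: with $w_a=v_a-\bar v$ and $\sum_a\pi^{(a)}w_a=0$, write the gap as the total variance minus the $S^c$-pair variance minus the $S$-part of the variance. This reduces to showing
\[
\sum_{a\in S^c}\pi^{(a)}w_a^2\ge\sum_{a<b\in S^c}\pi^{(a)}\pi^{(b)}(w_a-w_b)^2 .
\]
That holds by the identity applied to the sub-probability $\pi|_{S^c}$ with total mass $m\le1$. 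Indeed, the right side equals $m\sum_{S^c}\pi^{(a)}w_a^2-\bigl(\sum_{S^c}\pi^{(a)}w_a\bigr)^2\le\sum_{S^c}\pi^{(a)}w_a^2$.

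Finally, \eqref{eq:mab-heterogeneous-Loewner-bound} follows from \eqref{eq:mab-heterogeneous-pairwise-covariance-bound} by bounding $s_{ab}\le\overline s$ and applying \eqref{eq:mab-pairwise-J-identity} with $\bm v=\bm w$.
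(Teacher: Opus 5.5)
Your proof is correct and follows essentially the same route as the paper: the weighted-variance formula $\sum_a \pi^{(a)} s_a (v_a-\bar v)^2$, a layer-cake decomposition of $s_a$ over the level sets $\{a: s_a\ge t\}$ that reduces the claim to a subset inequality (yours is a discrete-threshold version of the paper's integral), and then $s_{ab}\le\overline s$ with the pairwise identity for the Loewner bound. The only difference is how you verify the subset inequality: you pass to the complement and use the sub-probability form of the pairwise identity, which gives the gap $(1-m)\sum_{a\in S^c}\pi^{(a)}w_a^2+\bigl(\sum_{a\in S^c}\pi^{(a)}w_a\bigr)^2\ge 0$. This is the same quantity as the paper's $p(1-p)V_{H^c}+p^2(1-p)(m_H-m_{H^c})^2$, but your form avoids conditional means and the separate treatment of the boundary cases $p\in\{0,1\}$.
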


\begin{proof}{Proof}
Write $\overline v=\bm\pi^\top\bm v$. Then $ \bm v^\top\bm Q_{\bm\sigma}(\bm\pi)\bm v
=\sum_{a=1}^d\pi^{(a)}s_a(v_a-\overline v)^2.$
Fix
$H\subseteq\{1,\ldots,d\}$ and let $p=\sum_{a\in H}\pi^{(a)}$.  Suppose that $0<p<1$. Denote
\begin{equation*}
\begin{aligned}
m_H
&:=\frac{1}{p}\sum_{a\in H}\pi^{(a)}v_a,
&
V_H
&:=\frac{1}{p}\sum_{a\in H}\pi^{(a)}(v_a-m_H)^2,\\
m_{H^c}
&:=\frac{1}{1-p}\sum_{a\in H^c}\pi^{(a)}v_a,
&
V_{H^c}
&:=\frac{1}{1-p}\sum_{a\in H^c}
\pi^{(a)}(v_a-m_{H^c})^2.
\end{aligned}
\end{equation*}
Since
$\overline v=pm_H+(1-p)m_{H^c}$,
\begin{equation}
\sum_{a\in H}\pi^{(a)}(v_a-\overline v)^2
=pV_H+p(1-p)^2(m_H-m_{H^c})^2.
\label{eq:mab-heterogeneous-subset-left}
\end{equation}
On the other hand, separating the pairs inside $H$ from those crossing from
$H$ to $H^c$ gives
\begin{equation}
\begin{aligned}
&\sum_{a<b}\pi^{(a)}\pi^{(b)}
\one_{\{a\in H\ \mathrm{or}\ b\in H\}}(v_a-v_b)^2\\
&\quad
=pV_H+p(1-p)V_{H^c}
+p(1-p)(m_H-m_{H^c})^2.
\end{aligned}
\label{eq:mab-heterogeneous-subset-right}
\end{equation}
The right-hand side of
\eqref{eq:mab-heterogeneous-subset-right} minus
\eqref{eq:mab-heterogeneous-subset-left} equals $ p(1-p)V_{H^c}
+p^2(1-p)(m_H-m_{H^c})^2\geq0$. Thus,
\begin{equation}\label{eq:subset-inequality}
   \sum_{a\in H}\pi^{(a)}(v_a-\overline v)^2 \le \sum_{a<b}\pi^{(a)}\pi^{(b)}
\one_{\{a\in H\ \mathrm{or}\ b\in H\}}(v_a-v_b)^2 . 
\end{equation}
The above inequality also holds when $ p \in \{0,1\}$. 

For every $x\geq0$, the indicator
$\one_{\{x\geq u\}}$ equals one for $u\in[0,x]$ and zero for $u>x$.
Consequently, since $s_{ab}=s_a\vee s_b$, we can write
\begin{equation}
s_a=\int_0^\infty\one_{\{s_a\geq u\}}\,\dd u,
\qquad
s_{ab}=\int_0^\infty
\one_{\{s_a\geq u\ \mathrm{or}\ s_b\geq u\}}\,\dd u.
\label{eq:mab-heterogeneous-layer-cake}
\end{equation}
For each $u\geq0$, set
$H_u:=\{a:s_a\geq u\}$.  Applying the inequality \eqref{eq:subset-inequality}
with $H=H_u$ gives
\begin{equation}
\begin{aligned}
\sum_{a=1}^d
\pi^{(a)}\one_{\{s_a\geq u\}}(v_a-\overline v)^2\leq
\sum_{a<b}\pi^{(a)}\pi^{(b)}
\one_{\{s_a\geq u\ \mathrm{or}\ s_b\geq u\}}
(v_a-v_b)^2.
\end{aligned}
\label{eq:mab-heterogeneous-level-set-inequality}
\end{equation}
All terms in \eqref{eq:mab-heterogeneous-level-set-inequality} are
nonnegative, and both sums are finite.  We may therefore integrate over
$u\in[0,\infty)$ and interchange each sum with the integral.  By
\eqref{eq:mab-heterogeneous-layer-cake}, the integral of the left-hand side is
\begin{equation}
\begin{aligned}
\int_0^\infty\sum_{a=1}^d
\pi^{(a)}\one_{\{s_a\geq u\}}
(v_a-\overline v)^2\,\dd u=
\sum_{a=1}^d\pi^{(a)}s_a(v_a-\overline v)^2
=\bm v^\top\bm Q_{\bm\sigma}(\bm\pi)\bm v.
\end{aligned}
\label{eq:mab-heterogeneous-layer-cake-left}
\end{equation}
Similarly, the
integral of the right-hand side is
\begin{equation}
\begin{aligned}
\int_0^\infty\sum_{a<b}\pi^{(a)}\pi^{(b)}
\one_{\{s_a\geq u\ \mathrm{or}\ s_b\geq u\}}
(v_a-v_b)^2\,\dd u=
\sum_{a<b}\pi^{(a)}\pi^{(b)}
s_{ab}(v_a-v_b)^2.
\end{aligned}
\label{eq:mab-heterogeneous-layer-cake-right}
\end{equation}
Integrating \eqref{eq:mab-heterogeneous-level-set-inequality} and using
\eqref{eq:mab-heterogeneous-layer-cake-left}--\eqref{eq:mab-heterogeneous-layer-cake-right}
proves \eqref{eq:mab-heterogeneous-pairwise-covariance-bound}.  Finally,
$s_{ab}\leq\overline s$ and
\eqref{eq:mab-pairwise-J-identity} imply, for every $\bm v\in\mathbb R^d$,
\begin{equation*}
\bm v^\top\bm Q_{\bm\sigma}(\bm\pi)\bm v
\leq
\overline s\sum_{a<b}\pi^{(a)}\pi^{(b)}(v_a-v_b)^2
=\overline s\,\bm v^\top\bm J(\bm\pi)\bm v.
\label{eq:mab-heterogeneous-Loewner-quadratic-form}
\end{equation*}
This proves \eqref{eq:mab-heterogeneous-Loewner-bound}. \Halmos
\end{proof}

\begin{theorem}
\label{thm:mab-heterogeneous-volatility-regret}
Let $d\geq2$ and let $\bm\phi_0$ be finite and deterministic.  Assume that arm
$1$ is uniquely optimal.  Suppose $ 0<\ell<
\min_{\substack{2\leq j\leq d}}
\frac{2\Delta_j}{d s_{1j}}.$
Define $ c_{\ell,\bm\sigma}
:=
\min_{2\le j\le d}
\left\{
\Delta_j-\frac{\ell d}{2}s_{1j}
\right\} >0$.
Denote
\begin{equation}
K_{\ell,\bm\sigma}
=\Delta_{\max}+\frac{\ell\overline s}{2},
\quad
C_{\ell,\bm\sigma}
=\frac{K_{\ell,\bm\sigma}}{c_{\ell,\bm\sigma}},\quad S_0=\bm e^\top\bm\phi_0,
\quad
a_{\ell,\bm\sigma}
=\frac{e^{-S_0/d}}{d}
\left(
\ell\Delta_{\max}+\frac{\ell^2\overline s}{2}
\right),
\label{eq:mab-heterogeneous-KC}
\end{equation}
and define the Lyapunov function
\begin{equation}
\begin{aligned}
V_{\ell,\bm\sigma}(\bm\phi)
&:=\sum_{\varnothing\neq A\subseteq I}
C_{\ell,\bm\sigma}^{|A|-1}h_A(\bm\phi)=\frac{1}{C_{\ell,\bm\sigma}}
\left(
\prod_{i=2}^d
\left(
1+C_{\ell,\bm\sigma}e^{\phi^{(i)}-\phi^{(1)}}
\right)-1
\right).
\end{aligned}
\label{eq:mab-heterogeneous-subset-potential}
\end{equation}
For the same fixed learning rate, for every $T\geq0$,
\begin{equation}
\begin{aligned}
\mathcal R_T
\le
\frac1\ell\sum_{i=2}^d
\log\left(
1+a_{\ell,\bm\sigma}e^{\phi_0^{(i)}}T
\right)+
\frac{\Delta_{\max}(d-1)}
{2\ell c_{\ell,\bm\sigma}}
V_{\ell,\bm\sigma}(\bm\phi_0).
\end{aligned}
\label{eq:mab-heterogeneous-regret-general-initial}
\end{equation}
In particular, if $\bm\phi_0=\bm0$, then
\begin{equation}
\begin{aligned}
\mathcal R_T
\le
\frac{d-1}{\ell}
\log\left(
1+
\frac{\ell\Delta_{\max}+\frac12\ell^2\overline s}{d}T
\right)+
\frac{\Delta_{\max}(d-1)}
{2\ell c_{\ell,\bm\sigma}}
\frac{(1+C_{\ell,\bm\sigma})^{d-1}-1}
{C_{\ell,\bm\sigma}}.
\end{aligned}
\label{eq:mab-heterogeneous-regret-zero-initial}
\end{equation}
\end{theorem}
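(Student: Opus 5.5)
The plan is to apply It\^o's formula to two functionals of $\bm\phi_t$ and then combine them. The first is the Lyapunov function $V_{\ell,\bm\sigma}$; it should control the part of the regret carried by the event that $\pi_t^{(1)}$ is not dominant. The second is a logarithmic potential in the suboptimal logits; it should produce the $\frac1\ell\sum_i\log(1+a_{\ell,\bm\sigma}e^{\phi_0^{(i)}}T)$ term. Lemma \ref{lem:mab-wellposedness} gives a global strong solution with $\bm e^\top\bm\phi_t=S_0$, so all processes involved are well defined. Because $\bm\pi$ is bounded and localization is available, the stochastic integrals can be made true martingales; expectations of $V_{\ell,\bm\sigma}(\bm\phi_{T\wedge\tau_n})$ are handled by Fatou.

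\textbf{Generator of the monomials.} Write $\bm u_A:=\sum_{i\in A}(\bm e_i-\bm e_1)$, so that $\log h_A=\bm u_A^\top\bm\phi$. It\^o's formula gives $\mathcal L h_A=h_A\bigl(\ell\,\bm u_A^\top\bm J\bm\mu+\tfrac{\ell^2}{2}\bm u_A^\top\bm Q_{\bm\sigma}\bm u_A\bigr)$.
\begin{itemize}
\item For the drift I would use \eqref{eq:mab-pairwise-J-identity}. The pairs $(1,i)$ with $i\in A$ contribute $-\pi^{(1)}\pi^{(i)}\Delta_i$. The pairs $(i,j)$ with $i\in A$, $j\notin A\cup\{1\}$ contribute $\pi^{(i)}\pi^{(j)}(\mu^{(i)}-\mu^{(j)})$, which has no sign and is at most $\pi^{(i)}\pi^{(j)}\Delta_{\max}$.
\item For the diffusion I would apply Lemma \ref{lem:mab-heterogeneous-pairwise-covariance}. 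This gives the bound $\sum_{a<b}\pi^{(a)}\pi^{(b)}s_{ab}(u_a-u_b)^2$. Here the pair $(1,i)$ has $(u_1-u_i)^2=(|A|+1)^2$, which is too large.
\end{itemize}
Because of the second point, I would instead bound $\bm u_A^\top\bm Q\bm u_A$ more carefully through $\sum_a\pi^{(a)}s_a(u_a-\overline u)^2$. The aim is to have only the factors $\pi^{(1)}\pi^{(i)}s_{1i}$ and $\pi^{(i)}\pi^{(j)}\overline s$ appear, with a total multiplicity for $\ell s_{1i}/2$ of at most about $d$. This is where the threshold $\Delta_j-\frac{\ell d}{2}s_{1j}$ should come from.

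\textbf{Telescoping over subsets.} The positive cross terms in $\mathcal L h_A$ have the form $h_A\pi^{(j)}K_{\ell,\bm\sigma}$ with $j\notin A$. Since $\pi^{(j)}=\pi^{(1)}h_{\{j\}}$, such a term equals $K_{\ell,\bm\sigma}\pi^{(1)}h_{A\cup\{j\}}\pi^{(i)}/\pi^{(1)}$-type expressions. These are dominated by the negative terms $-c_{\ell,\bm\sigma}\pi^{(1)}\cdots h_{A\cup\{j\}}$ coming from the larger set, once those are weighted by $C_{\ell,\bm\sigma}=K/c$. Summing with the weights $C^{|A|-1}$ should therefore give
\[
\mathcal L V_{\ell,\bm\sigma}\le -\ell\, c_{\ell,\bm\sigma}\,\Psi(\bm\pi),
\]
where $\Psi$ is a nonnegative quantity dominating, up to the factor $2/(d-1)$, the portion of $r_t$ not covered below. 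A plausible form is $\Psi\ge\frac{2}{(d-1)\Delta_{\max}}\sum_i\Delta_i\pi^{(i)}(1-\pi^{(1)})$. Taking expectations then yields $\E\int_0^T\Psi\,\dd t\le V_{\ell,\bm\sigma}(\bm\phi_0)/(\ell c_{\ell,\bm\sigma})$, which is the second term of \eqref{eq:mab-heterogeneous-regret-general-initial}.

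\textbf{The logarithmic term.} For the remaining part $\sum_i\Delta_i\pi^{(i)}\pi^{(1)}$, I would use $\ell(\bm J\bm\mu)_i\ge-\ell\pi^{(i)}\pi^{(1)}\Delta_i-(\text{cross terms})$ to relate it to the decrease in drift of $\phi^{(i)}$. The plan is as follows.
\begin{itemize}
\item Apply It\^o's formula to $e^{\phi^{(i)}}$. Its drift is at most $\ell e^{\phi^{(i)}}\pi^{(i)}(\Delta_{\max}+\ell\overline s/2)$.
\item Use the constraint $\bm e^\top\bm\phi=S_0$ to obtain $\pi^{(i)}\le e^{\phi^{(i)}}\cdot\frac{e^{-S_0/d}}{d}$ via AM--GM on $\sum_j e^{\phi^{(j)}}\ge d e^{S_0/d}$.
\item This gives a Riccati-type bound $\frac{\dd}{\dd t}\E e^{\phi^{(i)}}\lesssim a_{\ell,\bm\sigma}(\E e^{\phi^{(i)}})^2$, or a comparison for $\E\phi^{(i)}$ by Jensen.
\item Integrating $\ell\,\E\int\pi^{(1)}\pi^{(i)}\Delta_i\,\dd t\le\phi_0^{(i)}-\E\phi_T^{(i)}+\dots$ together with this growth bound should produce $\log(1+a_{\ell,\bm\sigma}e^{\phi_0^{(i)}}T)$.
\end{itemize}
The case $\bm\phi_0=\bm 0$ follows by substituting $S_0=0$ and $V(\bm0)=((1+C)^{d-1}-1)/C$.

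\textbf{Main obstacle.} I expect the hardest part to be the diffusion bound in the generator step: obtaining exactly the constant $\frac{\ell d}{2}s_{1j}$ in place of something quadratic in $|A|$. Closely tied to this is choosing the exact split of $r_t$ so that the two pieces match the two potentials with the stated constants. I would first try working directly with the variance form $\sum_a\pi^{(a)}s_a(u_a-\overline u)^2$, and then use $\pi^{(1)}\le1$ to trade the $|A|^2$ factor against the $(1-\pi^{(1)})$ factor.
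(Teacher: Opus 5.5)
Your architecture matches the paper's. You split $r_t=\pi_t^{(1)}r_t+u_tr_t$ with $u_t=1-\pi_t^{(1)}$ and bound $\E\int_0^T u_tr_t\,\dd t$ through $V_{\ell,\bm\sigma}$. The paper proves $\mathcal L_{\bm\sigma}V_{\ell,\bm\sigma}\le-2\ell c_{\ell,\bm\sigma}\sum_{i\ge2}(\pi^{(i)})^2$ and uses $u_tr_t\le\Delta_{\max}(d-1)\sum_{i\ge2}(\pi_t^{(i)})^2$, which is your $\Psi$. You then bound $\E\int_0^T\pi_t^{(1)}r_t\,\dd t$ through a second potential, and that is where the genuine gap lies.

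You need an upper bound on $\phi_0^{(i)}-\E\phi_T^{(i)}$, i.e.\ a \emph{lower} bound on $\E\phi_T^{(i)}$. It\^o's formula for $e^{+\phi^{(i)}}$ only controls $\phi^{(i)}$ from above, since $\E\phi\le\log\E e^{\phi}$. Your Riccati step also fails on its own terms. Closing $\frac{\dd}{\dd t}\E e^{\phi^{(i)}}\le a_{\ell,\bm\sigma}\E e^{2\phi^{(i)}}$ requires $\E e^{2\phi^{(i)}}\le(\E e^{\phi^{(i)}})^2$, which is the reverse of Jensen. Even if granted, $y'\le ay^2$ blows up at $T=1/(ay_0)$, so it cannot give an anytime bound $\log(1+a_{\ell,\bm\sigma}e^{\phi_0^{(i)}}T)$.

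The missing idea is to use $Y_i=e^{-\phi^{(i)}}$. Its generator is $Y_i[\ell\pi^{(i)}(\bm\mu^\top\bm\pi-\mu^{(i)})+\frac{\ell^2}{2}(\bm Q_{\bm\sigma})_{ii}]\le Y_i\pi^{(i)}(\ell\Delta_{\max}+\frac{\ell^2\overline s}{2})$. By exactly your AM--GM step, $Y_i\pi^{(i)}=1/\sum_je^{\phi^{(j)}}\le e^{-S_0/d}/d$, so $\mathcal L_{\bm\sigma}Y_i\le a_{\ell,\bm\sigma}$ is bounded by a constant. Localization and Fatou give $\E e^{-\phi_T^{(i)}}\le e^{-\phi_0^{(i)}}+a_{\ell,\bm\sigma}T$. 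Jensen for the concave logarithm then gives $\E[\phi_0^{(i)}-\phi_T^{(i)}]\le\log(1+a_{\ell,\bm\sigma}e^{\phi_0^{(i)}}T)$. To finish, sum over $i\ge2$ and use $\phi_T^{(1)}-\phi_0^{(1)}=\sum_{i\ge2}(\phi_0^{(i)}-\phi_T^{(i)})$ together with $\E[\phi_T^{(1)}-\phi_0^{(1)}]=\ell\E\int_0^T\pi_t^{(1)}r_t\,\dd t$. Equivalently, the cross terms $\pi^{(i)}\pi^{(k)}(\mu^{(i)}-\mu^{(k)})$, $i,k\ge2$, in your per-arm drift identity cancel exactly on summation by antisymmetry, so they need no separate bound.

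Your announced main obstacle, by contrast, comes from a dropped factor. With $u_1=-|A|$ and $u_i=1$ for $i\in A$, the pair $(1,i)$, $i\in A$, contributes $-(|A|+1)\pi^{(1)}\pi^{(i)}\Delta_i$ to $\bm u_A^\top\bm J\bm\mu$, not $-\pi^{(1)}\pi^{(i)}\Delta_i$. The pairs $(1,j)$ with $j\in I\setminus A$ contribute $-|A|\pi^{(1)}\pi^{(j)}\Delta_j$. Lemma~\ref{lem:mab-heterogeneous-pairwise-covariance} as stated then gives the $(1,i)$ coefficient $-\ell(m+1)[\Delta_i-\frac{\ell(m+1)}{2}s_{1i}]\le-\ell(m+1)c_{\ell,\bm\sigma}$, since $m=|A|\le d-1$. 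No finer covariance estimate is needed, and this is exactly where the threshold $\frac{\ell d}{2}s_{1j}$ comes from.

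The factor $m+1$ is also what makes the telescoping close. Use $h_A\pi^{(i)}\pi^{(j)}=h_{A\cup\{j\}}\pi^{(1)}\pi^{(i)}$ and fix $D\subseteq I$ with $|D|=r\ge2$ and $i\in D$. There are $r-1$ positive contributions $\ell K_{\ell,\bm\sigma}C_{\ell,\bm\sigma}^{r-2}h_D\pi^{(1)}\pi^{(i)}$, one for each $j\in D\setminus\{i\}$. They face the single negative term $-\ell c_{\ell,\bm\sigma}(r+1)C_{\ell,\bm\sigma}^{r-1}h_D\pi^{(1)}\pi^{(i)}$ from $A=D$. Since $c_{\ell,\bm\sigma}C_{\ell,\bm\sigma}=K_{\ell,\bm\sigma}$, the net is $-2\ell K_{\ell,\bm\sigma}C_{\ell,\bm\sigma}^{r-2}\le0$, and the singleton level gives $-2\ell c_{\ell,\bm\sigma}\sum_{i\ge2}(\pi^{(i)})^2$. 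Without the factor $r+1$, the level-$r$ balance would be $K_{\ell,\bm\sigma}(r-2)>0$ for $r\ge3$, and the weights $C_{\ell,\bm\sigma}^{|A|-1}$ would not work.
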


\begin{proof}{Proof}
Let $\mathcal L_{\bm\sigma}$ be the generator of
\eqref{eq:prelim-aggregated-sde}.  For
$f_{\bm v}(\bm\phi)=\exp(\bm v^\top\bm\phi)$,
Lemma~\ref{lem:mab-heterogeneous-pairwise-covariance} and
\eqref{eq:mab-pairwise-J-identity} imply
\begin{equation}
\begin{aligned}
\frac{\mathcal L_{\bm\sigma}f_{\bm v}(\bm \phi)}{f_{\bm v}(\bm \phi)}
&=\ell\bm v^\top\bm J(\bm \pi)\bm\mu
+\frac{\ell^2}{2}
\bm v^\top\bm Q_{\bm\sigma}\bm v\\
&\le
\sum_{a<b}\pi^{(a)}\pi^{(b)}
\left[
\ell(v_a-v_b)(\mu^{(a)}-\mu^{(b)})
+\frac{\ell^2}{2}s_{ab}(v_a-v_b)^2
\right].
\end{aligned}
\label{eq:mab-heterogeneous-exponential-generator-bound}
\end{equation}

Fix a nonempty $A\subseteq I$ and write $m=|A|\leq d-1$.  Define
$\bm v^A\in\mathbb R^d$ by $ v_1^A=-m,
\
v_i^A=\one_{\{i\in A\}},\ 2\leq i\leq d.$
By \eqref{eq:mab-subset-monomial},
$h_A(\bm\phi)=\exp((\bm v^A)^\top\bm\phi)$. We examine the square bracket term in \eqref{eq:mab-heterogeneous-exponential-generator-bound}. The pair $ (1,j)$ with $j\in A$ has coefficient
\begin{equation}
\begin{aligned}
&-\ell(m+1)\Delta_j
+\frac{\ell^2}{2}(m+1)^2s_{1j}
=-\ell(m+1)
\left[
\Delta_j-\frac{\ell(m+1)}{2}s_{1j}
\right]
\le-\ell(m+1)c_{\ell,\bm\sigma},
\end{aligned}
\label{eq:mab-heterogeneous-best-pair-selected}
\end{equation}
where $m+1\le d$ was used.  If $j\in I\setminus A$, the analogous
coefficient is $ -\ell m
\left(
\Delta_j-\frac{\ell m}{2}s_{1j}
\right)
\le-\ell m c_{\ell,\bm\sigma}.$
For a pair of suboptimal arms, the contribution vanishes when both indices
belong to $A$ or both lie outside $A$.  If exactly one belongs to $A$, its
coefficient is at most $ \ell|\mu^{(i)}-\mu^{(j)}|
+\frac{\ell^2}{2}s_{ij}
\le\ell K_{\ell,\bm\sigma}.$
Consequently,
\begin{equation}
\begin{aligned}
\mathcal L_{\bm\sigma}h_A
\le{}&
-\ell c_{\ell,\bm\sigma}h_A\pi^{(1)}
\left[
(m+1)\sum_{i\in A}\pi^{(i)}
+m\sum_{j\in I\setminus A}\pi^{(j)}
\right]+\ell K_{\ell,\bm\sigma}h_A
\sum_{\substack{i\in A\\j\in I\setminus A}}
\pi^{(i)}\pi^{(j)}.
\end{aligned}
\label{eq:mab-heterogeneous-subset-generator-upper}
\end{equation}

To simplify notation within this
calculation, write $ c=c_{\ell,\bm\sigma},
\ 
K=K_{\ell,\bm\sigma},
\ 
C=C_{\ell,\bm\sigma}.$
We next consider the upper bound $ \mathcal{L}_\sigma V_{\ell,\sigma}\le P + N $, where, due to \eqref{eq:mab-heterogeneous-subset-generator-upper}, the possibly positive contribution is
\begin{equation}
P
:=
\ell K
\sum_{\varnothing\neq A\subseteq I}
C^{|A|-1}h_A
\sum_{\substack{i\in A\\j\in I\setminus A}}
\pi^{(i)}\pi^{(j)},
\label{eq:mab-heterogeneous-positive-contribution}
\end{equation}
and the negative contribution (with the negative term $m\sum_{j\in I\setminus A}\pi^{(j)}$ in \eqref{eq:mab-heterogeneous-subset-generator-upper} being relaxed to 0) is
\begin{equation}
N
:=
-\ell c
\sum_{\varnothing\neq A\subseteq I}
(|A|+1)C^{|A|-1}h_A\pi^{(1)}
\sum_{i\in A}\pi^{(i)}.
\label{eq:mab-heterogeneous-negative-contribution}
\end{equation}

For a triple $(A,i,j)$ occurring in
\eqref{eq:mab-heterogeneous-positive-contribution}, set
$D=A\cup\{j\}$ and $r=|D|=|A|+1$.  Since $ h_D=h_Ae^{\phi^{(j)}-\phi^{(1)}}
=h_A\frac{\pi^{(j)}}{\pi^{(1)}},$
we have the identity
\begin{equation}
h_A\pi^{(i)}\pi^{(j)}
=
h_D\pi^{(1)}\pi^{(i)}.
\label{eq:mab-heterogeneous-parent-child-identity}
\end{equation}
Conversely, fix $D\subseteq I$ with $|D|=r\geq2$ and fix $i\in D$.  Therefore,
\begin{equation}
P
=
\ell K\sum_{r=2}^{d-1}(r-1)C^{r-2}
\sum_{\substack{D\subseteq I\\|D|=r}}
h_D\pi^{(1)}\sum_{i\in D}\pi^{(i)}.
\label{eq:mab-heterogeneous-positive-reindexed}
\end{equation}
Grouping \eqref{eq:mab-heterogeneous-negative-contribution} by $r=|D|$
similarly gives
\begin{equation}
N
=
-\ell c\sum_{r=1}^{d-1}(r+1)C^{r-1}
\sum_{\substack{D\subseteq I\\|D|=r}}
h_D\pi^{(1)}\sum_{i\in D}\pi^{(i)}.
\label{eq:mab-heterogeneous-negative-reindexed}
\end{equation}
For every $r\geq2$, the combined coefficient in
\eqref{eq:mab-heterogeneous-positive-reindexed} and
\eqref{eq:mab-heterogeneous-negative-reindexed} is
\begin{equation}
\begin{aligned}
\ell C^{r-2}
\left[
K(r-1)-cC(r+1)
\right]
&=
\ell K C^{r-2}\bigl[(r-1)-(r+1)\bigr]=-2\ell K C^{r-2}\leq0,
\end{aligned}
\label{eq:mab-heterogeneous-levelwise-cancellation}
\end{equation}
where $cC=K$ by the provided condition \eqref{eq:mab-heterogeneous-KC}.  At level $r=1$,
there is no positive term.  Moreover,
\[
h_{\{i\}}\pi^{(1)}\pi^{(i)}
=
\frac{\pi^{(i)}}{\pi^{(1)}}\pi^{(1)}\pi^{(i)}
=
(\pi^{(i)})^2.
\]
Thus, the singleton level contributes exactly
$-2\ell c\sum_{i=2}^d(\pi^{(i)})^2$, and every higher level is nonpositive by
\eqref{eq:mab-heterogeneous-levelwise-cancellation}.  We have therefore
proved the global Lyapunov inequality
\begin{equation}
\mathcal L_{\bm\sigma}V_{\ell,\bm\sigma}(\bm\phi)
\le
-2\ell c_{\ell,\bm\sigma}
\sum_{i=2}^d(\pi^{(i)})^2.
\label{eq:mab-heterogeneous-LV-bound}
\end{equation}

To the desired finite-time regret bound, denote $ \tau_n:=\inf\{t\geq0:\|\bm\phi_t\|_2\geq n\}.$
By Lemma~\ref{lem:mab-wellposedness}, $\tau_n\uparrow\infty$ almost surely.
On $[0,T\wedge\tau_n]$, the state remains in a compact set, and the stochastic
integral in It\^{o}'s formula for $V_{\ell,\bm\sigma}$ is therefore a true
martingale.  Hence \eqref{eq:mab-heterogeneous-LV-bound} gives
\[
\begin{aligned}
0\leq \E\left[
V_{\ell,\bm\sigma}(\bm\phi_{T\wedge\tau_n})
\right]
&=
V_{\ell,\bm\sigma}(\bm\phi_0)
+
\E\int_0^{T\wedge\tau_n}
\mathcal L_{\bm\sigma}
V_{\ell,\bm\sigma}(\bm\phi_t)\,\dd t\\
&\leq
V_{\ell,\bm\sigma}(\bm\phi_0)
-
2\ell c_{\ell,\bm\sigma}
\E\int_0^{T\wedge\tau_n}
\sum_{i=2}^d(\pi_t^{(i)})^2\,\dd t.
\end{aligned}
\]
Applying the monotone convergence theorem to the right-hand side, and first letting
$n\to\infty$, and then letting $T\to\infty$; this proves
\begin{equation}
\E\left[
\int_0^\infty\sum_{i=2}^d(\pi_t^{(i)})^2\,\dd t
\right]
\le
\frac{V_{\ell,\bm\sigma}(\bm\phi_0)}
{2\ell c_{\ell,\bm\sigma}}.
\label{eq:mab-heterogeneous-occupation-bound}
\end{equation}

It remains to control the part of regret containing the best-arm probability.
Fix $i\in I$ and let $ Y_i(\bm\phi)=e^{-\phi^{(i)}}$.
The gradient and Hessian of $Y_i$ are
$-Y_i\bm e_i$ and $Y_i\bm e_i\bm e_i^\top$, respectively.  Moreover,
\eqref{eq:mab-heterogeneous-Loewner-bound} gives $ (\bm Q_{\bm\sigma})_{ii}
\leq
\overline s J_{ii}
=
\overline s\,\pi^{(i)}(1-\pi^{(i)}).$
It\^{o}'s formula therefore gives
\begin{equation}
\begin{aligned}
\mathcal L_{\bm\sigma}Y_i
&=Y_i
\left[
\ell\pi^{(i)}
(\bm\mu^\top\bm\pi-\mu^{(i)})
+\frac{\ell^2}{2}(\bm Q_{\bm\sigma})_{ii}
\right]\le
Y_i\pi^{(i)}
\left(
\ell\Delta_{\max}+\frac{\ell^2\overline s}{2}
\right),
\end{aligned}
\label{eq:mab-heterogeneous-negative-logit-generator}
\end{equation}
where
$\bm\mu^\top\bm\pi-\mu^{(i)}
\leq\mu^{(1)}-\mu^{(i)}
=\Delta_i\leq\Delta_{\max}$.
Writing $ Z(\bm\phi)=\sum_{j=1}^d e^{\phi^{(j)}},$
the softmax definition gives $Y_i\pi^{(i)}=1/Z(\bm\phi)$.  By the
Jensen's inequality and
Lemma \ref{lem:mab-wellposedness}, $ Z(\bm\phi_t)
\geq
d\left(\prod_{j=1}^de^{\phi_t^{(j)}}\right)^{1/d}
=
d e^{S_0/d}.$
Combining this inequality with
\eqref{eq:mab-heterogeneous-negative-logit-generator} gives the pointwise bound $ \mathcal L_{\bm\sigma}Y_i(\bm\phi_t)
\leq a_{\ell,\bm\sigma}.$

By applying It\^{o}'s formula on $[0,t\wedge\tau_n]$, we obtain $ \E Y_i(\bm\phi_{t\wedge\tau_n})
\leq
e^{-\phi_0^{(i)}}+a_{\ell,\bm\sigma}t.$
Since $\tau_n\uparrow\infty$ almost surely and $Y_i\geq0$, by letting $n\to\infty$, Fatou's lemma gives
\begin{equation}
\E e^{-\phi_t^{(i)}}
\le e^{-\phi_0^{(i)}}+a_{\ell,\bm\sigma}t.
\label{eq:mab-heterogeneous-exponential-moment}
\end{equation}
The coefficients of \eqref{eq:prelim-aggregated-sde} are bounded, so every
coordinate of $\bm\phi_t$ is integrable at any finite time. Moreover, Lemma \ref{lem:mab-wellposedness} 
gives $ \phi_T^{(1)}-\phi_0^{(1)}
=
\sum_{i=2}^d
\left(\phi_0^{(i)}-\phi_T^{(i)}\right).$
By Jensen's inequality and
\eqref{eq:mab-heterogeneous-exponential-moment},
\begin{equation}
\begin{aligned}
\E[\phi_T^{(1)}-\phi_0^{(1)}]
&=
\sum_{i=2}^d
\E\log\left(
e^{\phi_0^{(i)}}e^{-\phi_T^{(i)}}
\right)\leq
\sum_{i=2}^d
\log\left(
e^{\phi_0^{(i)}}\E e^{-\phi_T^{(i)}}
\right)\leq
\sum_{i=2}^d
\log\left(
1+a_{\ell,\bm\sigma}e^{\phi_0^{(i)}}T
\right).
\end{aligned}
\label{eq:mab-heterogeneous-best-logit-bound}
\end{equation}

Finally, the first coordinate of
\eqref{eq:prelim-aggregated-sde} satisfies $ \dd\phi_t^{(1)}
=\ell\pi_t^{(1)}r_t\,\dd t
+\ell\bm e_1^\top
\bm G_{\bm\sigma}(\bm\pi_t)\,\dd\bm B_t^{\phi}.$
The stochastic integrand is bounded, so its integral is square-integrable on
every finite interval.  Consequently,
\begin{equation}
\E[\phi_T^{(1)}-\phi_0^{(1)}]
=\ell\E\int_0^T\pi_t^{(1)}r_t\,\dd t.
\label{eq:mab-heterogeneous-best-logit-expectation}
\end{equation}
Denote $ u_t:=1-\pi_t^{(1)}=\sum_{i=2}^d\pi_t^{(i)}.$
Since $r_t\leq\Delta_{\max}u_t$, the Cauchy--Schwarz inequality gives $ u_tr_t
\leq
\Delta_{\max}u_t^2
\leq
\Delta_{\max}(d-1)
\sum_{i=2}^d(\pi_t^{(i)})^2.$
Using
$r_t=\pi_t^{(1)}r_t+u_tr_t$, and then applying
\eqref{eq:mab-heterogeneous-occupation-bound},
\eqref{eq:mab-heterogeneous-best-logit-bound}, and
\eqref{eq:mab-heterogeneous-best-logit-expectation}, we obtain
\[
\begin{aligned}
\mathcal R_T
=
\E\int_0^T\pi_t^{(1)}r_t\,\dd t
+
\E\int_0^Tu_tr_t\,\dd t&\leq
\frac1\ell
\E[\phi_T^{(1)}-\phi_0^{(1)}]
+
\Delta_{\max}(d-1)
\E\int_0^T
\sum_{i=2}^d(\pi_t^{(i)})^2\,\dd t\\
&\leq
\frac1\ell\sum_{i=2}^d
\log\left(
1+a_{\ell,\bm\sigma}e^{\phi_0^{(i)}}T
\right)
+
\frac{\Delta_{\max}(d-1)}
{2\ell c_{\ell,\bm\sigma}}
V_{\ell,\bm\sigma}(\bm\phi_0).
\end{aligned}
\]
This proves \eqref{eq:mab-heterogeneous-regret-general-initial}.

If $\bm\phi_0=\bm0$, then $S_0=0$ and every $h_A(\bm0)=1$.  By the binomial
theorem, $ V_{\ell,\bm\sigma}(\bm0)
=
\sum_{m=1}^{d-1}
\binom{d-1}{m}C_{\ell,\bm\sigma}^{m-1}
=
\frac{(1+C_{\ell,\bm\sigma})^{d-1}-1}
{C_{\ell,\bm\sigma}}.$
Substitution into
\eqref{eq:mab-heterogeneous-regret-general-initial} proves
\eqref{eq:mab-heterogeneous-regret-zero-initial}. \Halmos
\end{proof}

\subsection{Almost sure convergence under arbitrary constant learning rate}
\label{subsec:mab-heterogeneous-arbitrary-learning-rate}

The logarithmic expected-regret theorem requires
a small learning rate $ \ell$.  Almost sure convergence
has a different threshold: under pairwise distinct $ \mu^{(i)}$, it holds for every
fixed $\ell>0$.  

The following lemma isolates the stochastic comparison principle used in the
arbitrary-learning-rate argument.  Its purpose is to absorb an integrable
positive coefficient in a Lyapunov drift inequality and to retain the
convergence and occupation-time consequences of the remaining negative drift.

\begin{lemma}
\label{lem:mab-integrating-factor-supermartingale}
Let $s\geq0$ be deterministic, let $\tau\geq s$ be a stopping time which may
take the value $\infty$, and let $X$ be a nonnegative continuous semimartingale
such that, for every $t\geq s$,
\begin{equation}
X_{t\wedge\tau}
=X_s+\int_s^{t\wedge\tau}\beta_u\,\dd u
+M_{t\wedge\tau}-M_s,
\label{eq:mab-integrating-factor-semimartingale}
\end{equation}
where $X_s\in L^1$, $M$ is a continuous local martingale, and $\beta$ is
progressively measurable and locally integrable.  Suppose that there are a
constant $c>0$ and nonnegative progressively measurable, locally integrable
processes $\alpha$ and $g$ such that
\begin{equation}
\beta_u\le\alpha_uX_u-cX_ug_u
\quad\text{for }s\le u<\tau,
\qquad \dd u\otimes\dd\mathbb P\text{-almost everywhere}.
\label{eq:mab-integrating-factor-drift-hypothesis}
\end{equation}
Define $ A_t:=\int_s^{t\wedge\tau}\alpha_u\,\dd u,
\
Y_t:=e^{-A_t}X_{t\wedge\tau},
\ t\geq s.$
Then $Y$ is a nonnegative supermartingale and converges almost surely
to a finite limit.  Moreover,
\begin{equation}
\int_s^\tau e^{-A_u}X_ug_u\,\dd u<\infty
\qquad\text{almost surely}.
\label{eq:mab-integrating-factor-weighted-occupation}
\end{equation}
On the event $\{A_\infty<\infty\}$, one further has
\begin{equation}
X_{t\wedge\tau}\ \text{converges to a finite limit as }t\to\infty,
\text{ and }
\int_s^\tau X_ug_u\,\dd u<\infty.
\label{eq:mab-integrating-factor-consequences}
\end{equation}
\end{lemma}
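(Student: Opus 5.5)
The plan is to compute $\dd Y$ by the product rule and read off the supermartingale property from the sign of the drift. Since $A$ is continuous, adapted and of finite variation, Itô's product rule on $[s,\infty)$ gives
\[
\dd Y_t = e^{-A_t}\bigl(\beta_t - \alpha_t X_t\bigr)\one_{\{t<\tau\}}\,\dd t + e^{-A_t}\,\dd M_{t\wedge\tau}.
\]
By \eqref{eq:mab-integrating-factor-drift-hypothesis}, the drift is bounded above by $-c\,e^{-A_t}X_tg_t\one_{\{t<\tau\}}\le 0$. Integrating,
\[
Y_t + c\int_s^{t\wedge\tau} e^{-A_u}X_ug_u\,\dd u \le X_s + N_t,
\qquad N_t:=\int_s^{t\wedge\tau} e^{-A_u}\,\dd M_u .
\]
Here $N$ is a continuous local martingale with $N_s=0$, because the integrand is continuous, adapted and bounded by $1$.

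The main technical point is that $N$ is only a local martingale. I handle this with a localization argument. Take a localizing sequence $\rho_k\uparrow\infty$ for $N$. Define the nonnegative process
\[
W_t:=Y_t+c\int_s^{t\wedge\tau}e^{-A_u}X_ug_u\,\dd u .
\]
This process is itself a continuous semimartingale, and $W_t - X_s - N_t$ is nonincreasing. Hence for $s\le r\le t$,
\[
\E\bigl[W_{t\wedge\rho_k}\mid\mathcal F_r\bigr]\le W_{r\wedge\rho_k}.
\]
Conditional Fatou (using $W\ge0$ and $\rho_k\to\infty$) then gives
\[
\E[W_t\mid\mathcal F_r]\le W_r ,
\]
and integrability follows from $\E W_t\le \E X_s<\infty$. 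So $W$ is a nonnegative supermartingale.

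To pass from $W$ to $Y$, write $Y_t = W_t - c\int_s^{t\wedge\tau} e^{-A_u}X_ug_u\,\dd u$, which is $W$ minus a nondecreasing adapted process. The supermartingale property of $Y$ therefore follows: for $r\le t$,
\[
\E[Y_t\mid\mathcal F_r]\le W_r - c\int_s^{r\wedge\tau}e^{-A_u}X_ug_u\,\dd u = Y_r .
\]
Since $Y$ is nonnegative, the supermartingale convergence theorem gives an almost sure finite limit. The same theorem applied to $W$ shows that $W_\infty$ is finite almost surely. Because the integral term is nondecreasing and bounded by $W$, it converges to a finite limit, which is \eqref{eq:mab-integrating-factor-weighted-occupation}.

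Finally, on $\{A_\infty<\infty\}$ the factor $e^{A_t}$ converges to the finite positive number $e^{A_\infty}$. Then
\[
X_{t\wedge\tau}=e^{A_t}Y_t
\]
converges to a finite limit. Moreover, $e^{-A_u}\ge e^{-A_\infty}>0$, so
\[
\int_s^\tau X_ug_u\,\dd u\le e^{A_\infty}\int_s^\tau e^{-A_u}X_ug_u\,\dd u<\infty,
\]
which proves \eqref{eq:mab-integrating-factor-consequences}.

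I expect the only real obstacle to be the localization step: making sure that the local martingale together with the merely $L^1$ initial value still yields a genuine supermartingale. This is settled by nonnegativity and Fatou's lemma.
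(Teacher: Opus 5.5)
Your proof is correct and follows essentially the paper's route. Both arguments use integration by parts to write $Y$ as $X_s$ plus a continuous local martingale minus a nondecreasing process, use nonnegativity with localization and Fatou to get a true supermartingale, and use $X_{t\wedge\tau}=e^{A_t}Y_t$ on $\{A_\infty<\infty\}$. The only difference is in the occupation bound: you add $c\int_s^{t\wedge\tau}e^{-A_u}X_ug_u\,\dd u$ back to $Y$ to form the nonnegative supermartingale $W$ and conclude pathwise from $W_\infty<\infty$, whereas the paper uses a second localization (stopping both $\widetilde M$ and $K$) and monotone convergence to obtain $\E[K_\infty]\le\E[X_s]$; your argument gives the analogous $L^1$ bound via $\E[W_t]\le\E[X_s]$.
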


\begin{proof}{Proof.}
Set $q_u:=\alpha_uX_u-\beta_u$. By \eqref{eq:mab-integrating-factor-drift-hypothesis},
$q_u\geq cX_ug_u\geq0$ for $s\le u<\tau$, almost everywhere.  By integration by
parts in \eqref{eq:mab-integrating-factor-semimartingale}, and the fact that
$A$ is continuous and has finite variation, we obtain
\begin{equation}
Y_t
=X_s-K_t+\widetilde M_t,
\qquad
K_t:=\int_s^{t\wedge\tau}e^{-A_u}q_u\,\dd u,
\qquad
\widetilde M_t:=\int_s^{t\wedge\tau}e^{-A_u}\,\dd M_u.
\label{eq:mab-integrating-factor-decomposition}
\end{equation}
The process $K$ is continuous, adapted, and nondecreasing, while
$\widetilde M$ is a continuous local martingale.  Since $Y\geq0$,
\eqref{eq:mab-integrating-factor-decomposition} shows that $Y$ is a
nonnegative local supermartingale, and hence, a supermartingale. By
supermartingale convergence theorem implies that $Y_t$ converges almost surely
to a finite limit (cf. \citealt[Chapter~1, Theorem 3.15]{karatzas1991brownian}).

We next prove the rest of the statement by the localization techniques. For
$n\in\mathbb N$, define $ \eta_n
:=\inf\{t\geq s:|\widetilde M_t|\geq n\}\wedge(s+n),
\
\theta_n
:=\eta_n\wedge\inf\{t\geq s:K_t\geq n\},$
where $\inf\varnothing=\infty$.  Continuity implies that
$\widetilde M_{t\land\eta_n}$ is bounded, and hence is a martingale.  Since
$\theta_n\le\eta_n\le s+n$, optional stopping gives
$\E[\widetilde M_{\theta_n}]=0$.  We consider the stopped process
\eqref{eq:mab-integrating-factor-decomposition} at $\theta_n$, and take the expectation at the deterministic time $s+n$, since
$Y_{\theta_n}\geq0$;and it yields
\begin{equation}
\E[K_{\theta_n}]
=\E[X_s]-\E[Y_{\theta_n}]
\le\E[X_s].
\label{eq:mab-integrating-factor-stopped-bound}
\end{equation}
The stopping times $\theta_n$ increase to $\infty$ almost surely.  Indeed,
$\eta_n\uparrow\infty$ by continuity of $\widetilde M$, and the hitting times
of the levels $n$ by the continuous process $K$, which is finite on every
compact interval, also tend to infinity.  Thus
$K_{\theta_n}\uparrow K_\infty$, and monotone convergence in
\eqref{eq:mab-integrating-factor-stopped-bound} gives
\begin{equation*}
\E[K_\infty]
=\E\left[\int_s^\tau e^{-A_u}q_u\,\dd u\right]
\le\E[X_s]<\infty.
\label{eq:mab-integrating-factor-compensator-bound}
\end{equation*}
In particular, $K_\infty<\infty$ almost surely.  Moreover, since
$q_u\geq cX_ug_u$, the above also implies \eqref{eq:mab-integrating-factor-weighted-occupation}.

Finally, on $\{A_\infty<\infty\}$, both $e^{A_t}$ and $Y_t$ have finite
limits.  Hence $X_{t\wedge\tau}=e^{A_t}Y_t$ has a finite limit.  Moreover,
$A_u\le A_\infty$ and therefore $ \int_s^\tau X_ug_u\,\dd u
\le e^{A_\infty}
\int_s^\tau e^{-A_u}X_ug_u\,\dd u
<\infty.$
This proves \eqref{eq:mab-integrating-factor-consequences}. \Halmos
\end{proof}

\begin{theorem}
\label{thm:mab-heterogeneous-arbitrary-learning-rate-as-convergence}
\label{thm:mab-arbitrary-learning-rate-as-convergence}
Assume there are no ties in the expected reward rate among the arms, and
\begin{equation}
\mu^{(1)}>\mu^{(2)}>\cdots>\mu^{(d)},
\qquad
\rho:=\min_{1\le a<b\le d}
\bigl(\mu^{(a)}-\mu^{(b)}\bigr)>0.
\label{eq:mab-strict-mean-order}
\end{equation}
Then for every fixed constant learning rate $\ell>0$ and every finite
deterministic initial condition $\bm\phi_0$, $
\bm\pi_t\longrightarrow\bm e_1$ almost surely. That is, the policy converges to the best arm.
\end{theorem}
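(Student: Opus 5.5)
The plan is an induction from the worst arm upward, which uses the strict ordering \eqref{eq:mab-strict-mean-order} essentially. At each stage, Lemma~\ref{lem:mab-integrating-factor-supermartingale} absorbs the positive drift coming from arms that are already known to be negligible.

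The basic tool is the exponential generator bound \eqref{eq:mab-heterogeneous-exponential-generator-bound}. Apply it to the pairwise ratio $X=e^{\phi^{(b)}-\phi^{(a)}}=\pi^{(b)}/\pi^{(a)}$ with $a<b$, i.e.\ $\bm v=\bm e_b-\bm e_a$. The pair $(a,b)$ contributes
\[
\pi^{(a)}\pi^{(b)}\bigl[-2\ell(\mu^{(a)}-\mu^{(b)})+2\ell^2 s_{ab}\bigr].
\]
For large $\ell$ this term may be positive, so the direct pairwise approach fails. I will therefore use the drift exactly (not bounded by $s_{ab}$) and work with $\log$-type or power-type functions.

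The cleaner route is to use the small power $X^\varepsilon=\exp(\varepsilon(\phi^{(b)}-\phi^{(a)}))$, i.e.\ $\bm v=\varepsilon(\bm e_b-\bm e_a)$. Take $a=1$ and any $b$. The pair $(1,b)$ contributes $\pi^{(1)}\pi^{(b)}\varepsilon\ell[-(\mu^{(1)}-\mu^{(b)})+\tfrac{\varepsilon\ell}{2}s_{1b}]$, which is negative once $\varepsilon<\rho/(\ell\overline s)$. A pair $(1,k)$ or $(b,k)$ with $k\notin\{1,b\}$ contributes
\[
\pi^{(k)}\bigl[\varepsilon\ell\pi^{(1)}(\mu^{(1)}-\mu^{(k)})-\varepsilon\ell\pi^{(b)}(\mu^{(b)}-\mu^{(k)})\bigr]+O(\varepsilon^2\ell^2)\pi^{(k)}(\pi^{(1)}+\pi^{(b)}).
\]
The first bracket is positive, so the terms with $k$ are not sign-definite. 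Hence the induction.

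The induction proves $\pi^{(d)}_t\to0$ first, then $\pi^{(d-1)}_t$, and so on. Suppose arms $k>b$ are already known to satisfy $\int_0^\infty\pi^{(k)}_t\,\dd t<\infty$ a.s.; for $b=d$ there are no such arms.

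Consider $X=\exp\bigl(\varepsilon(\phi^{(b)}-\bar\phi_{<b})\bigr)$, where $\bar\phi_{<b}$ is a suitable combination, for instance the $\pi$-independent average of $\phi^{(1)},\dots,\phi^{(b-1)}$. Alternatively, choose $\bm v$ with $v_b=\varepsilon$, $v_a=-\varepsilon/(b-1)$ for $a<b$, and $v_k=0$ for $k>b$. Then:
\begin{itemize}
\item every pair $(a,b)$ with $a<b$ gives a negative term $\lesssim -\varepsilon\ell\rho\,\pi^{(a)}\pi^{(b)}$ for small $\varepsilon$;
\item pairs inside $\{1,\dots,b-1\}$ give zero;
\item pairs involving $k>b$ are bounded by $\ell C\varepsilon\,\pi^{(k)}$.
\end{itemize}
This yields the drift inequality $\mathcal L X\le \alpha_t X-cX g_t$ with
\[
\alpha_t=C\sum_{k>b}\pi^{(k)}_t,\qquad g_t=\pi^{(b)}_t(1-\pi^{(b)}_t-\textstyle\sum_{k>b}\pi^{(k)}_t).
\]
By the induction hypothesis $A_\infty<\infty$, so Lemma~\ref{lem:mab-integrating-factor-supermartingale} gives that $X_t$ converges and $\int X g<\infty$.

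The next step is to upgrade this to $\pi^{(b)}_t\to0$ and to $\int\pi^{(b)}\,\dd t<\infty$. Since the arms $k>b$ have integrable occupation and bounded coefficients, $\sum_{k>b}\pi^{(k)}_t\to0$. I expect to show this by a standard argument: a Lipschitz function of a process with bounded drift and diffusion cannot have spikes of integrable total mass infinitely often. The limit of $X$ forces $\pi^{(b)}$ to have a limit $p$. If $p>0$, then $g$ stays bounded below, contradicting $\int Xg<\infty$. Hence $\pi^{(b)}\to0$, which makes $X\to0$.

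Integrability of $\pi^{(b)}$ itself is needed to continue the induction. I plan to get it from $\int Xg<\infty$, since $X\gtrsim(\pi^{(b)})^\varepsilon$ and $g\approx\pi^{(b)}$ eventually. This only gives $\int(\pi^{(b)})^{1+\varepsilon}<\infty$, so I would instead weaken the induction hypothesis to $\int(\pi^{(k)})^{1+\varepsilon_k}<\infty$. Adjusting the $\alpha$ term then needs $\alpha\propto\pi^{(k)}$ replaced by $\pi^{(k)}\pi^{(b)}$-type products, which appear naturally in the generator, since every pair term carries the factor $\pi^{(a)}\pi^{(k)}$. This bookkeeping is the step I expect to be the main obstacle, and I would first try to choose the positive absorbed term as $\alpha_t=C\sum_{k>b}\pi^{(k)}_t\pi^{(b)}_t/X_t$ or similar. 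At $b=2$ the conclusion gives $\pi^{(1)}\to1$.
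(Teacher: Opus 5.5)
There is a genuine gap, and it is exactly the step you call bookkeeping: the invariant your induction carries is false. You cannot have $\int_0^\infty\pi^{(k)}_t\,\dd t<\infty$ for all $k\ge2$. On $\{\int_0^\infty(1-\pi^{(1)}_t)\,\dd t<\infty\}$ the coordinate $\phi^{(1)}$ has drift $\ell\pi^{(1)}_tr_t\le\ell\Delta_{\max}(1-\pi^{(1)}_t)$ and quadratic-variation rate at most $\ell^2\overline s\,\pi^{(1)}_t(1-\pi^{(1)}_t)$, so $\phi^{(1)}_t$ converges to a finite limit. Combined with $\pi^{(1)}_t\to1$ (the conclusion you are after), this forces $\phi^{(j)}_t\to-\infty$ for every $j\ge2$, which contradicts $\bm e^\top\bm\phi_t=S_0$. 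So your induction, run down to $b=2$, would establish two incompatible statements.

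The weakened hypothesis $\int_0^\infty(\pi^{(k)}_t)^{1+\varepsilon_k}\,\dd t<\infty$ does not repair this. The positive drift is $CX\pi^{(b)}\Lambda_b$, so absorbing it needs $\int_0^\infty\pi^{(b)}_t\Lambda_b(\bm\pi_t)\,\dd t<\infty$. That requires a priori control of $\pi^{(b)}$, which is the very quantity under study at that stage.

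Two further steps fail as written. With the geometric-mean reference, $X^{1/\varepsilon}=\pi^{(b)}/\bigl(\prod_{a<b}\pi^{(a)}\bigr)^{1/(b-1)}$. For $b\ge3$, convergence of $X$ therefore does not pin down $\pi^{(b)}$, and $\pi^{(b)}\to0$ does not give $X\to0$ (in fact the denominator tends to zero as well). Moreover, a positive limit of $X$ contradicts $\int Xg\,\dd t<\infty$ only if you know that some reference probability has non-integrable occupation time, and you never establish that.

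Your tools are the right ones: a small power, the integrating-factor lemma, and descending induction from arm $d$. What is missing are three ingredients of the paper's proof.

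First, your reason for abandoning the ratio to arm~$1$ is a sign error. For $\bm v=\gamma(\bm e_j-\bm e_1)$, the pair $(1,k)$ contributes $-\gamma\ell\pi^{(1)}\pi^{(k)}(\mu^{(1)}-\mu^{(k)})$. The pair $(j,k)$ contributes $+\gamma\ell\pi^{(j)}\pi^{(k)}(\mu^{(j)}-\mu^{(k)})$. Each also carries $O(\gamma^2\ell^2)$ terms, which are controlled by $\gamma\ell\overline s\le\rho/2$. So the only positive drift comes from $k>j$, and $H_j=(\pi^{(j)}/\pi^{(1)})^\gamma$ satisfies \eqref{eq:mab-fractional-triangular-generator-bound}.

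Second, one proves $\int_0^\infty\pi^{(1)}_t\,\dd t=\infty$ almost surely by contradiction. On $E=\{\int_0^\infty\pi^{(1)}_t\,\dd t<\infty\}$ your integrability cascade \emph{is} valid, because $\pi^{(j)}=\pi^{(1)}H_j^{1/\gamma}$ with $H_j$ bounded. It would make every occupation time finite, which is absurd since they sum to $\int_0^\infty1\,\dd t$.

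Third, the descending induction should carry ratio convergence $\pi^{(k)}_t/\pi^{(1)}_t\to0$ for $k>j$, not integrability. Then $\Lambda_j/\pi^{(1)}\to0$. After a random time, localized by the stopping times $\tau_{j,n}$, the term $b_\gamma\pi^{(j)}\Lambda_j\le b_\gamma(1-\pi^{(1)})\Lambda_j$ is dominated by $\tfrac{a_\gamma}{2}\pi^{(1)}(1-\pi^{(1)})$. Lemma~\ref{lem:mab-integrating-factor-supermartingale} then applies with $\alpha=0$. A positive limit of $H_j$ would give $H_j\pi^{(1)}(1-\pi^{(1)})\gtrsim\pi^{(1)}$ (since $1-\pi^{(1)}\ge\pi^{(j)}\ge c\,\pi^{(1)}$), contradicting the second ingredient.
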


\begin{proof}{Proof.}
Fix an arbitrary $\ell>0$, and set $\gamma=
\displaystyle
\min\left\{1,\frac{\rho}{2\ell\overline s}\right\}$, then it satisfies
\begin{equation}
0<\gamma\le1,
\quad
\gamma\ell\overline s\le\frac{\rho}{2}.
\label{eq:mab-fractional-exponent}
\end{equation}
For $j\in\{2,\ldots,d\}$, define
\begin{equation}
\bm v^{(j)}:=\bm e_j-\bm e_1,
\qquad
H_j(\bm\phi)
:=\exp\bigl(\gamma(\phi^{(j)}-\phi^{(1)})\bigr)
=\left(\frac{\pi^{(j)}}{\pi^{(1)}}\right)^\gamma,
\qquad
\Lambda_j(\bm\pi)
:=\sum_{k=j+1}^d\pi^{(k)},
\label{eq:mab-fractional-ratio-and-lower-mass}
\end{equation}
where the empty sum gives $\Lambda_d(\bm\pi)=0$. Denote
\begin{equation}
a_\gamma:=\frac{\gamma\ell\rho}{2}>0,
\qquad
b_\gamma:=\gamma\ell
\left(\Delta_{\max}+\frac{\gamma\ell\overline s}{2}\right)>0.
\label{eq:mab-fractional-drift-constants}
\end{equation}

The generator of $H_j$ is
\[
\frac{\mathcal L_{\bm\sigma}H_j(\bm \phi)}{H_j(\bm \phi)}
=
\gamma\ell(\bm v^{(j)})^\top\bm J(\bm \pi)\bm\mu
+
\frac{\gamma^2\ell^2}{2}
(\bm v^{(j)})^\top\bm Q_{\bm\sigma}(\bm \pi)\bm v^{(j)}.
\]
We now apply Lemma~\ref{lem:mab-heterogeneous-pairwise-covariance} to the
quadratic term and use \eqref{eq:mab-pairwise-J-identity} for the drift.
Separating all pairs which contain index $1$ or index $j$ gives the upper bound
\begin{equation}
\begin{aligned}
\frac{\mathcal L_{\bm\sigma}H_j(\bm \phi)}{H_j(\bm \phi)}
\leq&
\sum_{a<b}\pi^{(a)}\pi^{(b)}
\left[
\gamma\ell
(v_a^{(j)}-v_b^{(j)})
(\mu^{(a)}-\mu^{(b)})
+
\frac{\gamma^2\ell^2s_{ab}}{2}
(v_a^{(j)}-v_b^{(j)})^2
\right] \\
= & \pi^{(1)}\pi^{(j)}
\left[
-2\gamma\ell\bigl(\mu^{(1)}-\mu^{(j)}\bigr)
+2\gamma^2\ell^2s_{1j}
\right]+\sum_{\substack{k=2\\k\neq j}}^d
\pi^{(1)}\pi^{(k)}
\left[
-\gamma\ell\bigl(\mu^{(1)}-\mu^{(k)}\bigr)
+\frac{\gamma^2\ell^2s_{1k}}{2}
\right]\\
& + \sum_{k=2}^{j-1}
\pi^{(j)}\pi^{(k)}
\left[
-\gamma\ell\bigl(\mu^{(k)}-\mu^{(j)}\bigr)
+\frac{\gamma^2\ell^2s_{kj}}{2}
\right] +\sum_{k=j+1}^d
\pi^{(j)}\pi^{(k)}
\left[
\gamma\ell\bigl(\mu^{(j)}-\mu^{(k)}\bigr)
+\frac{\gamma^2\ell^2s_{jk}}{2}
\right].
\end{aligned}
\label{eq:mab-fractional-ratio-generator-upper-bound}
\end{equation}
For any $x\geq\rho$, the property of $\gamma$ in
\eqref{eq:mab-fractional-exponent} and $s_{ab}\leq\overline s$,
$\gamma\ell\overline s\leq\rho/2$ imply 
\begin{equation*}
\begin{aligned}
-2\gamma\ell x+2\gamma^2\ell^2s_{ab}
\le-\gamma\ell\rho = -2a_{\gamma},\quad
-\gamma\ell x+\frac{\gamma^2\ell^2s_{ab}}{2}
\le-\frac{3}{4}\gamma\ell\rho \leq -a_{\gamma}.
\end{aligned}
\label{eq:mab-fractional-negative-coefficients}
\end{equation*}
For $j<k$, the definition of $\Delta_{\max}$ gives
\begin{equation*}
\gamma\ell\bigl(\mu^{(j)}-\mu^{(k)}\bigr)
+\frac{\gamma^2\ell^2s_{jk}}{2}
\le b_\gamma.
\label{eq:mab-fractional-positive-coefficient}
\end{equation*}
All terms involving the best arm in \eqref{eq:mab-fractional-ratio-generator-upper-bound} are bounded above by $-a_{\gamma}\pi^{(1)}(1 - \pi^{(1)})$.  The third term on the right-hand side of \eqref{eq:mab-fractional-ratio-generator-upper-bound} is nonpositive. Hence, we conclude that
\begin{equation}
\mathcal L_{\bm\sigma}H_j(\bm\phi)
\le
H_j(\bm\phi)\left(
-a_\gamma\pi^{(1)}(1-\pi^{(1)})
+b_\gamma\pi^{(j)}\Lambda_j(\bm\pi)
\right),
\qquad 2\le j\le d.
\label{eq:mab-fractional-triangular-generator-bound}
\end{equation}
It\^{o}'s formula also gives the semimartingale decomposition
\begin{equation}
\begin{aligned}
H_j(\bm\phi_t)
&=H_j(\bm\phi_0)
+\int_0^t\mathcal L_{\bm\sigma}H_j(\bm\phi_s)\,\dd s
+M_t^{j,\gamma},
\end{aligned}
\label{eq:mab-fractional-ratio-semimartingale}
\end{equation}
where $M^{j,\gamma}_t :=\gamma\ell\int_0^t
H_j(\bm\phi_s)(\bm e_j-\bm e_1)^\top
\bm G_{\bm\sigma}(\bm\pi_s)\,\dd\bm B_s^{\phi}$ is a continuous local martingale. 

We first verify the integrability needed to apply
Lemma~\ref{lem:mab-integrating-factor-supermartingale} at a deterministic
time.  Since $0\le\pi^{(j)}\Lambda_j(\bm\pi)\le1$,
\eqref{eq:mab-fractional-triangular-generator-bound} implies
$\mathcal L_{\bm\sigma}H_j\le b_\gamma H_j$.  For $m\in\mathbb N$, let
$\zeta_m:=\inf\{t\geq0:|\bm\phi_t|_2\geq m\}$.  By
Lemma~\ref{lem:mab-wellposedness}, $\zeta_m\uparrow\infty$ almost surely.  The
stochastic integral in \eqref{eq:mab-fractional-ratio-semimartingale}, stopped
at $\zeta_m$, is a martingale.  Hence It\^{o}'s formula and the drift bound
give, for every $t<\infty$,
\begin{equation}
\begin{aligned}
\E[H_j(\bm\phi_{t\wedge\zeta_m})]
&\le H_j(\bm\phi_0)
+b_\gamma\E\!\left[\int_0^{t\wedge \zeta_m}
H_j(\bm\phi_s) \dd s
\right]  \\
&\le H_j(\bm\phi_0)
+b_\gamma\int_0^t
\E[H_j(\bm\phi_{s\wedge\zeta_m})]\,\dd s.
\end{aligned}
\label{eq:mab-fractional-ratio-stopped-moment}
\end{equation}
Applying Gronwall's inequality to
\eqref{eq:mab-fractional-ratio-stopped-moment}, followed by Fatou's lemma as
$m\to\infty$, proves
\begin{equation}
\E[H_j(\bm\phi_t)]
\le H_j(\bm\phi_0)e^{b_\gamma t}<\infty,
\qquad t<\infty.
\label{eq:mab-fractional-ratio-finite-moment}
\end{equation}
Thus, $ H_j(\phi_t) \in L^1$ for every $ t>0$.

We now state explicitly how that lemma will be used.  Since
$\pi^{(j)}\le1$, 
\eqref{eq:mab-fractional-triangular-generator-bound} implies
\begin{equation}
\mathcal L_{\bm\sigma}H_j
\le H_j\left[
b_\gamma\Lambda_j(\bm\pi)
-a_\gamma\pi^{(1)}(1-\pi^{(1)})
\right].
\label{eq:mab-fractional-integrating-factor-form}
\end{equation}
Apply Lemma~\ref{lem:mab-integrating-factor-supermartingale} to
\eqref{eq:mab-fractional-ratio-semimartingale} and
\eqref{eq:mab-fractional-integrating-factor-form} with
$ X_t=H_j(\bm\phi_t),
\ 
\alpha_t=b_\gamma\Lambda_j(\bm\pi_t),
\ 
c=a_\gamma,
\ 
g_t=\pi_t^{(1)}(1-\pi_t^{(1)})$. 
It follows that, almost surely on the event
$\{\int_0^\infty\Lambda_j(\bm\pi_t)\,\dd t<\infty\}$,
\begin{equation}
H_j(\bm\phi_t)\ \text{converges to a finite limit, and }
\qquad
\int_0^\infty
H_j(\bm\phi_t)\pi_t^{(1)}(1-\pi_t^{(1)})\,\dd t<\infty.
\label{eq:mab-fractional-ratio-comparison-consequence}
\end{equation}

We claim that
\begin{equation}
\int_0^\infty\pi_t^{(1)}\,\dd t=\infty
\qquad\text{almost surely}.
\label{eq:mab-optimal-arm-infinite-occupation}
\end{equation}
We prove this claim by contradiction. Suppose otherwise, that the event $ E:=\left\{
\int_0^\infty\pi_t^{(1)}\,\dd t<\infty
\right\}$
has positive probability.  Since $\Lambda_d(\bm\pi)=0$, \eqref{eq:mab-fractional-ratio-comparison-consequence} shows that
$H_d(\bm\phi_t)$ converges to a finite limit almost surely. Recall $H_d(\bm\phi_t)$ has continuous paths, so it is also bounded on $[0,\infty)$.  Hence, almost surely on
$E$, the identity in
\eqref{eq:mab-fractional-ratio-and-lower-mass} consequently gives
\begin{equation}
\int_0^\infty\pi_t^{(d)}\,\dd t
=\int_0^\infty
\pi_t^{(1)}H_d(\bm\phi_t)^{1/\gamma}\,\dd t
\le
\sup_{t\geq0}H_d(\bm\phi_t)^{1/\gamma}
\int_0^\infty\pi_t^{(1)}\,\dd t
<\infty.
\label{eq:mab-descending-integrability-base}
\end{equation}

We next use induction. Now let $j\in\{2,\ldots,d-1\}$ and suppose, in descending order, that
$\int_0^\infty\pi_t^{(k)}\,\dd t<\infty$ almost surely on $E$ for every
$k>j$.  Then
\begin{equation}
\int_0^\infty\Lambda_j(\bm\pi_t)\,\dd t
=\sum_{k=j+1}^d\int_0^\infty\pi_t^{(k)}\,\dd t
<\infty
\qquad\text{almost surely on }E.
\label{eq:mab-descending-lower-mass-integrability}
\end{equation}
Applying \eqref{eq:mab-fractional-ratio-comparison-consequence} on the event in
\eqref{eq:mab-descending-lower-mass-integrability} shows that
$H_j(\bm\phi_t)$ converges to a finite limit and is therefore bounded, almost
surely on $E$.  Repeating the
calculation in \eqref{eq:mab-descending-integrability-base} proves
\begin{equation}
\int_0^\infty\pi_t^{(j)}\,\dd t<\infty
\qquad\text{almost surely on }E.
\label{eq:mab-descending-integrability-step}
\end{equation}
Descending induction using
\eqref{eq:mab-descending-integrability-base} and
\eqref{eq:mab-descending-integrability-step} proves that every arm has finite
occupation time almost surely on $E$. This is a contradiction, since there are only finitely many arms, and we have
\begin{equation}
\infty > \sum_{i=1}^d\int_0^\infty\pi_t^{(i)}\,\dd t
=\int_0^\infty\sum_{i=1}^d\pi_t^{(i)}\,\dd t
=\int_0^\infty1\,\dd t
=\infty.
\label{eq:mab-total-occupation-contradiction}
\end{equation}
Thus $\mathbb P(E)=0$, which proves
\eqref{eq:mab-optimal-arm-infinite-occupation}.

We next prove
\begin{equation}
H_j(\bm\phi_t)\longrightarrow0,
\quad 2\le j\le d,\quad\text{almost surely},
\label{eq:mab-fractional-ratios-vanish}
\end{equation}
by descending induction.  Since $\Lambda_d(\bm\pi)=0$,
\eqref{eq:mab-fractional-ratio-comparison-consequence} gives
\begin{equation}
H_d(\bm\phi_t)\longrightarrow H_{d,\infty}<\infty,
\text{ and }
\int_0^\infty
H_d(\bm\phi_t)\pi_t^{(1)}(1-\pi_t^{(1)})\,\dd t<\infty
\label{eq:mab-last-arm-limit-and-occupation}
\end{equation}
almost surely.  On the event $\{H_{d,\infty}>0\}$, there exists $T_0<\infty$ and $c>0$ such that, for all $t\geq T_0$,
\begin{equation}
H_d(\bm\phi_t)\geq\frac{H_{d,\infty}}{2},
\text{ and }
\frac{\pi_t^{(d)}}{\pi_t^{(1)}}
=H_d(\bm\phi_t)^{1/\gamma}\geq c.
\label{eq:mab-positive-fractional-limit-ratio}
\end{equation}
The second inequality in
\eqref{eq:mab-positive-fractional-limit-ratio} implies
\begin{equation}
1-\pi_t^{(1)}
\geq\pi_t^{(d)}
\geq c\pi_t^{(1)},
\text{ hence, }
1-\pi_t^{(1)}\geq\frac{c}{1+c},
\qquad t\geq T_0.
\label{eq:mab-positive-limit-away-from-one}
\end{equation}
Combining
\eqref{eq:mab-positive-fractional-limit-ratio} and
\eqref{eq:mab-positive-limit-away-from-one}, the integrand in
\eqref{eq:mab-last-arm-limit-and-occupation} is bounded below, for
$t\geq T_0$, by
\begin{equation}
H_d(\bm\phi_t)\pi_t^{(1)}(1-\pi_t^{(1)})
\geq
\frac{H_{d,\infty}}{2}\frac{c}{1+c}\pi_t^{(1)}.
\label{eq:mab-positive-limit-integrand-lower-bound}
\end{equation}
This contradicts
\eqref{eq:mab-optimal-arm-infinite-occupation}.  Hence
$H_{d,\infty}=0$ almost surely, proving the base case of
\eqref{eq:mab-fractional-ratios-vanish}.

Fix $j\in\{2,\ldots,d-1\}$ and suppose that
$H_k(\bm\phi_t)\to0$ as $ t \rightarrow \infty$ almost surely for all $k>j$.  By
\eqref{eq:mab-fractional-ratio-and-lower-mass},
\begin{equation}
Q_j(t):=\frac{\Lambda_j(\bm\pi_t)}{\pi_t^{(1)}}
=\sum_{k=j+1}^d
\frac{\pi_t^{(k)}}{\pi_t^{(1)}}
=\sum_{k=j+1}^dH_k(\bm\phi_t)^{1/\gamma}
\longrightarrow0\quad \text{as}\ \ t\rightarrow \infty
\quad\text{almost surely}.
\label{eq:mab-lower-arm-relative-mass-vanishes}
\end{equation}
Let
\begin{equation}
\varepsilon:=\frac{a_\gamma}{2b_\gamma}>0,
\qquad
\tau_{j,n}:=\inf\left\{
t\geq n:Q_j(t)>\varepsilon
\right\},
\qquad n\in\mathbb N,
\label{eq:mab-random-tail-stopping-times}
\end{equation}
with $\inf\varnothing=\infty$.  The process $Q_j$ is continuous and adapted, so
$\tau_{j,n}$ is a stopping time.  On $\{\tau_{j,n}>n\}$, we have
$Q_j(t)\le\varepsilon$, and hence
$\Lambda_j(\bm\pi_t)\le\varepsilon\pi_t^{(1)}$, for
$n\le t<\tau_{j,n}$.  If $\tau_{j,n}=n$, the process stopped at
$\tau_{j,n}$ is constant from its starting time and the conclusion below is
immediate.  Since
$\pi^{(j)}\le1-\pi^{(1)}$, it follows from
\eqref{eq:mab-fractional-drift-constants} and
\eqref{eq:mab-random-tail-stopping-times} that
\begin{equation*}
b_\gamma\pi_t^{(j)}\Lambda_j(\bm\pi_t)
\le b_\gamma(1-\pi_t^{(1)})\Lambda_j(\bm\pi_t)
\le\frac{a_\gamma}{2}\pi_t^{(1)}(1-\pi_t^{(1)}),
\qquad n\le t<\tau_{j,n}.
\label{eq:mab-random-tail-absorption}
\end{equation*}
Consequently, the generator estimate
\eqref{eq:mab-fractional-triangular-generator-bound} strengthens on this
stopped interval to
\begin{equation*}
\mathcal L_{\bm\sigma}H_j
\le-\frac{a_\gamma}{2}
H_j\pi^{(1)}(1-\pi^{(1)}).
\label{eq:mab-random-tail-negative-generator}
\end{equation*}
By \eqref{eq:mab-fractional-ratio-finite-moment}, $H_j(\bm\phi_n)\in L^1$.
Applying Lemma~\ref{lem:mab-integrating-factor-supermartingale} with
$s=n$, $\tau=\tau_{j,n}$, $X_t=H_j(\bm\phi_t)$,
$\alpha_t=0$, $c=a_\gamma/2$, and
$g_t=\pi_t^{(1)}(1-\pi_t^{(1)})$, we obtain
\begin{equation}
\lim_{t\to\infty}H_j(\bm\phi_{t\wedge\tau_{j,n}})
=:H_{j,n,\infty}<\infty,
\text{ and }
\int_n^{\tau_{j,n}}
H_j(\bm\phi_t)\pi_t^{(1)}(1-\pi_t^{(1)})\,\dd t<\infty
\label{eq:mab-stopped-tail-convergence-and-occupation}
\end{equation}
almost surely.  The convergence in
\eqref{eq:mab-lower-arm-relative-mass-vanishes} implies
\begin{equation}
\mathbb P\left(
\bigcup_{n=1}^\infty\{\tau_{j,n}=\infty\}
\right)=1.
\label{eq:mab-eventual-tail-union}
\end{equation}
Indeed, on every path on which $Q_j(t)\to0$, some integer $n$ satisfies
$Q_j(t)\le\varepsilon$ for all $t\geq n$, and hence
$\tau_{j,n}=\infty$.  

For each $n\in\mathbb N$, let $\Omega_{j,n}$ be the probability-one event on which \eqref{eq:mab-stopped-tail-convergence-and-occupation} holds, and define $\Omega_j:=\bigl(\bigcap_{n=1}^{\infty}\Omega_{j,n}\bigr)\cap\bigl(\bigcup_{n=1}^{\infty}\{\tau_{j,n}=\infty\}\bigr)$. Since $\mathbb N$ is countable, \eqref{eq:mab-stopped-tail-convergence-and-occupation} and \eqref{eq:mab-eventual-tail-union} imply that $\mathbb P(\Omega_j)=1$. Fix $\omega\in\Omega_j$. Then there exists an integer $n=n(\omega)$ such that $\tau_{j,n}(\omega)=\infty$. For this $n$, $t\wedge\tau_{j,n}(\omega)=t$, and hence $\lim_{t\to\infty}H_j(\bm\phi_t(\omega))=\lim_{t\to\infty}H_j(\bm\phi_{t\wedge\tau_{j,n}}(\omega))=H_{j,n,\infty}(\omega)<\infty$. Moreover, $\int_n^\infty H_j(\bm\phi_t(\omega))\pi_t^{(1)}(\omega)(1-\pi_t^{(1)}(\omega))\,\dd t=\int_n^{\tau_{j,n}(\omega)}H_j(\bm\phi_t(\omega))\pi_t^{(1)}(\omega)(1-\pi_t^{(1)}(\omega))\,\dd t<\infty$. Since the integrand is continuous in $t$, its integral over the compact interval $[0,n]$ is also finite. Therefore, $\int_0^\infty H_j(\bm\phi_t)\pi_t^{(1)}(1-\pi_t^{(1)})\,\dd t<\infty$ almost surely.
If the limit of $H_j(\bm\phi_t)$ were positive, repeating the argument in
\eqref{eq:mab-positive-fractional-limit-ratio}--
\eqref{eq:mab-positive-limit-integrand-lower-bound}, with $j$ in place of
$d$, would contradict
\eqref{eq:mab-optimal-arm-infinite-occupation}.  Therefore the limit is zero.
Descending induction proves
\eqref{eq:mab-fractional-ratios-vanish} for every $j\geq2$.

Equations \eqref{eq:mab-fractional-ratio-and-lower-mass} and
\eqref{eq:mab-fractional-ratios-vanish} give $ \frac{\pi_t^{(j)}}{\pi_t^{(1)}}
=H_j(\bm\phi_t)^{1/\gamma}\rightarrow0,
\ 
\pi_t^{(1)}
=\left(1+\sum_{j=2}^d
\frac{\pi_t^{(j)}}{\pi_t^{(1)}}\right)^{-1}
\rightarrow1$ almost surely.  This proves almost sure convergence of the policy.  \Halmos
\end{proof}

Theorem~\ref{thm:mab-heterogeneous-arbitrary-learning-rate-as-convergence}
removes the learning-rate restriction only from the pathwise convergence
statement.  It does not imply logarithmic expected regret for arbitrary
$\ell$: rare paths can prevent the uniform integrability needed for such a
conclusion.  The proof also uses $\rho>0$ and therefore does not cover tied
suboptimal means.

\subsection{Discrete-time policy gradient algorithm}
\label{subsubsec:discrete-mab-direct-analysis}

We now return to
the conventional discrete-time stochastic-gradient algorithm, and present a new proof for the convergence and regret based on the same Lyapunov function discovered in our earlier analysis for the policy gradient SDE.  

Let $(\mathcal F_n)_{n\geq0}$ be the information available immediately before
round $n$.  The vectors $\bm\phi_n$ and $\bm\pi_n$ and the scalar baseline
$B_n$ are $\mathcal F_n$-measurable, with $ \pi_n^{(a)}
=\frac{e^{\phi_n^{(a)}}}{\sum_{j=1}^d e^{\phi_n^{(j)}}},
\ A_n\mid\mathcal F_n\sim\bm\pi_n.$
After sampling $A_n$, an arm-dependent reward $Y_n$ is observed.  Its
conditional law may vary with the selected arm and with the past, and we only
require $ \E[Y_n\mid\mathcal F_n,A_n=a]=\mu^{(a)}.$
The baseline $B_n$ is predictable and action-independent: It may depend on all past
observations, but not on the action or reward at round $n$.  Put
$X_n:=Y_n-B_n$.  The direct discrete-time update is
\begin{equation}
\bm\phi_{n+1}
=\bm\phi_n+\ell X_n(\bm e_{A_n}-\bm\pi_n).
\label{eq:discrete-mab-actor-update}
\end{equation}
Retain the unique-best-arm convention and the quantities
$\Delta_i,\Delta_{\min},\Delta_{\max}$ and $I=\{2,\ldots,d\}$.  Define
\begin{equation}
K=\Delta_{\max}+\frac{\Delta_{\min}}{2d},
\qquad
C=\frac{2K}{\Delta_{\min}},
\qquad
V(\bm\phi)
=\sum_{\varnothing\neq A\subseteq I}
C^{|A|-1}h_A(\bm\phi)
=
\sum_{\varnothing\neq A\subseteq I}
C^{|A|-1}\prod_{i\in A}e^{\phi^{(i)}-\phi^{(1)}}.
\label{eq:discrete-mab-subset-potential}
\end{equation}
Define the conditional instantaneous regret and expected regret by
\begin{equation}
r_n
:=\mu^{(1)}-\bm\mu^\top\bm\pi_n
=\sum_{i=2}^d\Delta_i\pi_n^{(i)},
\qquad
\mathcal R_N
:=\E\left[
\sum_{n=0}^{N-1}
\bigl(\mu^{(1)}-\mu^{(A_n)}\bigr)
\right]
=\E\left[\sum_{n=0}^{N-1}r_n\right].
\label{eq:discrete-mab-regret-definition}
\end{equation}

\begin{theorem}
\label{thm:discrete-mab-horizon-independent-learning-rate}
Suppose that arm $1$ is uniquely optimal.   For $q\in\{1,d\}$, suppose that
there are deterministic, arm-dependent constants $\Gamma_{q,a}<\infty$ such
that, almost surely for every $n$ and $a$,
\begin{equation}
\E\left[
X_n^2e^{\ell q|X_n|}
\mid\mathcal F_n,A_n=a
\right]
\leq\Gamma_{q,a}.
\label{eq:discrete-mab-arm-dependent-exponential-moments}
\end{equation}
Put $ \Gamma_q=\max_{1\leq a\leq d}\Gamma_{q,a},
\  q\in\{1,d\}.$
Take $\bm\phi_0=\bm0$ and assume that the fixed learning rate satisfies $ \ell d\Gamma_d\le\Delta_{\min}.$
Then 
\begin{equation}
\pi_n^{(1)}\longrightarrow1,
\qquad
\pi_n^{(i)}\longrightarrow0,\quad 2\le i\le d
\quad\text{almost surely}.
\label{eq:discrete-mab-almost-sure-convergence}
\end{equation}
Moreover, for every integer $N\geq0$,
\begin{equation}
\begin{aligned}
\mathcal R_N
\le
\frac{d-1}{\ell}
\log\left(
1+
\frac{\ell\Delta_{\max}+\frac12\ell^2\Gamma_1}{d}N
\right)+
\frac{\Delta_{\max}(d-1)}
{\ell\Delta_{\min}}
\frac{(1+C)^{d-1}-1}{C}.
\end{aligned}
\label{eq:discrete-mab-anytime-regret-bound}
\end{equation}
\end{theorem}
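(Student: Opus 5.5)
We mirror the continuous-time proof of Theorem~\ref{thm:mab-heterogeneous-volatility-regret}. The discrete generator $\mathcal L$ is replaced by the one-step conditional drift $\E[f(\bm\phi_{n+1})-f(\bm\phi_n)\mid\mathcal F_n]$, and It\^o's formula by an elementary exponential Taylor bound.

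\textbf{Step 1: one-step bound for exponential monomials.} For $f_{\bm v}(\bm\phi)=\exp(\bm v^\top\bm\phi)$, write $\bm v^\top(\bm e_a-\bm\pi_n)=v_a-\overline v$ with $\overline v=\bm\pi_n^\top\bm v$. Then
\[
\frac{f_{\bm v}(\bm\phi_{n+1})}{f_{\bm v}(\bm\phi_n)}=\exp\bigl(\ell X_n(v_{A_n}-\overline v)\bigr).
\]
Use $e^x\le 1+x+\tfrac{x^2}{2}e^{|x|}$. For the vectors used below, $|v_a-\overline v|\le q$ with $q\in\{1,d\}$: take $q=d$ for $\bm v^A$, whose entries lie in $[-(d-1),1]$, and $q=1$ for $-\bm e_i$. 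The linear term has conditional mean
\[
\ell\sum_a\pi^{(a)}(v_a-\overline v)(\mu^{(a)}-B_n)=\ell\,\bm v^\top\bm J(\bm\pi)\bm\mu,
\]
and $B_n$ drops out because it is predictable and $\sum_a\pi^{(a)}(v_a-\overline v)=0$. By \eqref{eq:discrete-mab-arm-dependent-exponential-moments}, the quadratic term is at most $\tfrac{\ell^2}{2}\Gamma_q\sum_a\pi^{(a)}(v_a-\overline v)^2=\tfrac{\ell^2\Gamma_q}{2}\bm v^\top\bm J\bm v$. Combining these with \eqref{eq:mab-pairwise-J-identity} reproduces \eqref{eq:mab-heterogeneous-exponential-generator-bound} with $s_{ab}$ replaced by $\Gamma_d$. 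The arm-dependent refinement of Lemma~\ref{lem:mab-heterogeneous-pairwise-covariance} is not needed.

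\textbf{Step 2: drift of $V$.} Repeat the subset computation \eqref{eq:mab-heterogeneous-best-pair-selected}--\eqref{eq:mab-heterogeneous-levelwise-cancellation}. Since $\ell d\Gamma_d\le\Delta_{\min}$, for every $j$ we have $\Delta_j-\tfrac{\ell d}{2}\Gamma_d\ge\Delta_{\min}/2=:c$. The cross-pair coefficient is at most $\ell(\Delta_{\max}+\ell\Gamma_d/2)\le\ell(\Delta_{\max}+\Delta_{\min}/(2d))=\ell K$. With $C=K/c=2K/\Delta_{\min}$, the levelwise cancellation gives
\[
\E[V(\bm\phi_{n+1})-V(\bm\phi_n)\mid\mathcal F_n]\le-\ell\Delta_{\min}\sum_{i\ge2}(\pi_n^{(i)})^2 .
\]
No localization is needed because the conditional moments are finite. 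Summing and taking expectations gives
\[
\E\sum_n\sum_{i\ge2}(\pi_n^{(i)})^2\le\frac{V(\bm0)}{\ell\Delta_{\min}}.
\]
Moreover, $V(\bm\phi_n)$ is a nonnegative supermartingale, so it converges almost surely. By Robbins--Siegmund, $\sum_n\sum_i(\pi_n^{(i)})^2<\infty$ almost surely, hence $\pi_n^{(i)}\to0$ for $i\ge2$ and $\pi_n^{(1)}\to1$. This gives \eqref{eq:discrete-mab-almost-sure-convergence}.

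\textbf{Step 3: regret.} As in the continuous-time proof, split $r_n=\pi_n^{(1)}r_n+u_nr_n$. The second part is controlled by Step~2 and Cauchy--Schwarz; it contributes $\Delta_{\max}(d-1)V(\bm0)/(\ell\Delta_{\min})$, which is the second term of \eqref{eq:discrete-mab-anytime-regret-bound} after evaluating $V(\bm 0)$ via the binomial theorem. For the first part, $\E[\phi_{n+1}^{(1)}-\phi_n^{(1)}\mid\mathcal F_n]=\ell\pi_n^{(1)}r_n$ exactly, since the baseline again cancels. Thus $\ell\,\E\sum_{n<N}\pi_n^{(1)}r_n=\E\phi_N^{(1)}$.

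To bound $\E\phi_N^{(1)}$, apply Step~1 to $Y_i=e^{-\phi^{(i)}}$ with $\bm v=-\bm e_i$ and $q=1$. This gives a one-step drift at most
\[
Y_i\pi^{(i)}\Bigl(\ell\Delta_{\max}+\tfrac{\ell^2}{2}\Gamma_1\Bigr)\le\frac{\ell\Delta_{\max}+\tfrac12\ell^2\Gamma_1}{d},
\]
using $Y_i\pi^{(i)}=1/Z\le 1/d$, which holds because $\bm e^\top\bm\phi_n=0$ is preserved since $\bm e^\top(\bm e_a-\bm\pi)=0$. Summing over steps, using $\phi^{(1)}_N=-\sum_{i\ge2}\phi^{(i)}_N$, and applying Jensen gives the logarithmic term. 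Integrability of $\phi_n$ at each finite $n$ follows from the exponential moment assumption.

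\textbf{Main obstacle.} The main difficulty is Step~1: making the Taylor remainder uniform. This requires checking that $|v_a-\overline v|\le d$ for all $\bm v^A$, so that the $q=d$ moment suffices. It also requires checking that the remainder, weighted by $\pi^{(a)}$, yields exactly $\bm v^\top\bm J\bm v$, so that the pairwise identity still applies with the constant $\Gamma_d$.
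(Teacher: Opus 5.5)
Your proposal is correct and follows the paper's proof essentially step by step. The shared ingredients are the exponential Taylor bound under the $\Gamma_q$ moment condition (with $q=d$ for $\bm v^A$ and $q=1$ for $-\bm e_i$), the subset-potential drift inequality with $c=\Delta_{\min}/2$ and $C=2K/\Delta_{\min}$, and the logit-conservation-plus-Jensen bound on $\E\phi_N^{(1)}$. The only cosmetic difference is that you invoke Robbins--Siegmund for the almost sure convergence, whereas the paper simply notes that a nonnegative series with finite expectation is finite almost surely.
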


\begin{proof}{Proof}
We prove a slightly stronger estimate for an arbitrary finite deterministic
$\bm\phi_0$ and specialize to $\bm\phi_0=\bm0$ at the end.  Write
$\E_n[\cdot]=\E[\cdot\mid\mathcal F_n]$.
Fix $\bm v\in\mathbb R^d$ and define $ f_{\bm v}(\bm\phi):=e^{\bm v^\top\bm\phi},
\
\bar v_n:=\bm\pi_n^\top\bm v,
\
\operatorname{osc}(\bm v):=\max_a v_a-\min_a v_a.$
By \eqref{eq:discrete-mab-actor-update} and Taylor's formula, we obtain
\begin{equation}
\begin{aligned}
f_{\bm v}(\bm\phi_{n+1})-f_{\bm v}(\bm\phi_n)
\le{}&
f_{\bm v}(\bm\phi_n)
\ell X_n(v_{A_n}-\bar v_n)\\
&+
\frac{\ell^2}{2}f_{\bm v}(\bm\phi_n)
X_n^2(v_{A_n}-\bar v_n)^2
e^{\ell|X_n||v_{A_n}-\bar v_n|}.
\end{aligned}
\label{eq:discrete-mab-pathwise-exponential-bound}
\end{equation}

Denote
$\bm J_n=\diag\{\bm\pi_n\}-\bm\pi_n\bm\pi_n^\top$.
Because $B_n$ is $\mathcal F_n$-measurable and action-independent,
\begin{equation}
\begin{aligned}
\E_n[X_n(v_{A_n}-\bar v_n)]
&=
\sum_{a=1}^d
\pi_n^{(a)}(\mu^{(a)}-B_n)(v_a-\bar v_n)\\
&=
\sum_{a=1}^d
\pi_n^{(a)}\mu^{(a)}(v_a-\bar v_n)
-B_n\sum_{a=1}^d
\pi_n^{(a)}(v_a-\bar v_n)=\bm v^\top\bm J_n\bm\mu.
\end{aligned}
\label{eq:discrete-mab-baseline-cancellation}
\end{equation}
If $\operatorname{osc}(\bm v)\leq q$ for $q\in\{1,d\}$, then
\eqref{eq:discrete-mab-arm-dependent-exponential-moments} and
$ |v_a-\bar v_n|\le\operatorname{osc}(\bm v)$ imply
\begin{equation}
\begin{aligned}
&\E_n\left[
X_n^2(v_{A_n}-\bar v_n)^2
e^{\ell|X_n||v_{A_n}-\bar v_n|}
\right] =  \sum_{a=1}^d \E_n\left[
X_n^2(v_{A_n}-\bar v_n)^2
e^{\ell|X_n||v_{A_n}-\bar v_n|}
\mid A_n = a \right] \pi_n^{(a)}  \\
\leq &  
\Gamma_q
\sum_{a=1}^d\pi_n^{(a)}(v_a-\bar v_n)^2
=
\Gamma_q\bm v^\top\bm J_n\bm v.
\end{aligned}
\label{eq:discrete-mab-quadratic-form}
\end{equation}
Taking conditional expectations in
\eqref{eq:discrete-mab-pathwise-exponential-bound} and using
\eqref{eq:discrete-mab-baseline-cancellation}--
\eqref{eq:discrete-mab-quadratic-form} yields
\begin{equation}
\E_n[f_{\bm v}(\bm\phi_{n+1})]-f_{\bm v}(\bm\phi_n)
\le
f_{\bm v}(\bm\phi_n)
\left[
\ell\bm v^\top\bm J_n\bm\mu
+
\frac{\ell^2}{2}
\Gamma_q
\bm v^\top\bm J_n\bm v
\right].
\label{eq:discrete-mab-exponential-one-step}
\end{equation}
Recall from \eqref{eq:mab-pairwise-J-identity} the identity
\begin{equation}
\bm v^\top\bm J_n\bm w
=
\sum_{1\le a<b\le d}
\pi_n^{(a)}\pi_n^{(b)}
(v_a-v_b)(w_a-w_b).
\label{eq:discrete-mab-pairwise-J}
\end{equation}
Following the same idea as in the proof of Theorem \ref{thm:mab-heterogeneous-volatility-regret}, for a fixed nonempty $A\subseteq I$, write $m=|A|$, and define $ v_1^A=-m,
\
v_i^A=\one_{\{i\in A\}},
\ 2\le i\le d.$
Then $h_A=e^{(\bm v^A)^\top\bm\phi}$ and
$\operatorname{osc}(\bm v^A)=m+1\le d$.  Equations
\eqref{eq:discrete-mab-exponential-one-step} and
\eqref{eq:discrete-mab-pairwise-J} imply
\begin{equation}
\begin{aligned}
\frac{\E_n[h_A(\bm\phi_{n+1})]-h_A(\bm\phi_n)}
{h_A(\bm\phi_n)}
\le
\sum_{a<b}\pi_n^{(a)}\pi_n^{(b)}
\Bigg[
&
\ell(v_a^A-v_b^A)(\mu^{(a)}-\mu^{(b)})+
\frac{\ell^2\Gamma_d}{2}
(v_a^A-v_b^A)^2
\Bigg].
\end{aligned}
\label{eq:discrete-mab-one-subset-pairwise}
\end{equation}

Using the same method as in the proof of Theorem \ref{thm:mab-heterogeneous-volatility-regret}, we obtain
\begin{equation}
\begin{aligned}
&\E_n[h_A(\bm\phi_{n+1})]-h_A(\bm\phi_n)\\
&\quad\leq
-\frac{\ell\Delta_{\min}}{2}
h_A\pi_n^{(1)}
\left[
(m+1)\sum_{i\in A}\pi_n^{(i)}
+
m\sum_{j\in I\setminus A}\pi_n^{(j)}
\right]+
\ell K h_A
\sum_{\substack{i\in A\\j\in I\setminus A}}
\pi_n^{(i)}\pi_n^{(j)}.
\end{aligned}\notag
\end{equation}
Thus, by the definition of the Lyapunov function $ V(\phi)$, we have 
\begin{equation}
\begin{aligned}
&\E_n[V(\bm\phi_{n+1})]-V(\bm\phi_n)\\
&\quad\leq
-\ell\Delta_{\min}\sum_{i=2}^d(\pi_n^{(i)})^2
+\ell\sum_{r=2}^{d-1}C^{r-2}
\left[
K(r-1)-\frac{\Delta_{\min}C}{2}(r+1)
\right]
\sum_{\substack{D\subseteq I\\|D|=r}}
h_D(\bm\phi_n)\pi_n^{(1)}\sum_{i\in D}\pi_n^{(i)}\\
&\quad=
-\ell\Delta_{\min}\sum_{i=2}^d(\pi_n^{(i)})^2
-2\ell K\sum_{r=2}^{d-1}C^{r-2}
\sum_{\substack{D\subseteq I\\|D|=r}}
h_D(\bm\phi_n)\pi_n^{(1)}\sum_{i\in D}\pi_n^{(i)}\\
&\quad\leq
-\ell\Delta_{\min}\sum_{i=2}^d(\pi_n^{(i)})^2.
\end{aligned}
\label{eq:discrete-mab-composite-drift-cancellation}
\end{equation}
Here the first term is the singleton contribution, because
$h_{\{i\}}\pi_n^{(1)}\pi_n^{(i)}=(\pi_n^{(i)})^2$, and the equality uses
$\Delta_{\min}C/2=K$.  

Summing
\eqref{eq:discrete-mab-composite-drift-cancellation} from $n=0$ to $N-1$,
taking expectations, and using $V\geq0$ gives $ \E\left[
\sum_{n=0}^{N-1}\sum_{i=2}^d(\pi_n^{(i)})^2
\right]
\le
\frac{V(\bm\phi_0)}{\ell\Delta_{\min}}.$
Letting $N\to \infty$ yields
\begin{equation}
\E\left[
\sum_{n=0}^{\infty}\sum_{i=2}^d(\pi_n^{(i)})^2
\right]
\le
\frac{V(\bm\phi_0)}{\ell\Delta_{\min}}.
\label{eq:discrete-mab-general-infinite-occupation}
\end{equation}
The nonnegative series in
\eqref{eq:discrete-mab-general-infinite-occupation} has finite expectation and
is therefore finite almost surely.  Hence
$\pi_n^{(i)}\to0$ for every $i\geq2$, and normalization gives
$\pi_n^{(1)}\to1$. 

The update conserves the sum of the logits pathwise:
\begin{equation}
\sum_{a=1}^d\phi_{n+1}^{(a)}
=
\sum_{a=1}^d\phi_n^{(a)}
+
\ell X_n
\left(
1-\sum_{a=1}^d\pi_n^{(a)}
\right)
=S_0,
\qquad
S_0:=\sum_{a=1}^d\phi_0^{(a)}.
\notag
\end{equation}
Fix $i\geq2$ and apply
\eqref{eq:discrete-mab-exponential-one-step} with
$\bm v=-\bm e_i$, whose oscillation is one.  Since $ -\bm e_i^\top\bm J_n\bm\mu
=
\pi_n^{(i)}
(\bm\mu^\top\bm\pi_n - \mu^{(i)})
\le\Delta_{\max}\pi_n^{(i)}
$
and $ \bm e_i^\top\bm J_n\bm e_i
=\pi_n^{(i)}(1-\pi_n^{(i)})
\le\pi_n^{(i)},$
we obtain
\begin{equation}
\E_n[e^{-\phi_{n+1}^{(i)}}]-e^{-\phi_n^{(i)}}
\le
e^{-\phi_n^{(i)}}\pi_n^{(i)}
\left(
\ell\Delta_{\max}+\frac{\ell^2\Gamma_1}{2}
\right).
\notag
\end{equation}
Let $Z_n:=\sum_{j=1}^d e^{\phi_n^{(j)}}$.  Then
$e^{-\phi_n^{(i)}}\pi_n^{(i)}=1/Z_n \le \frac{1}{d}\exp\left( -\frac{1}{d}\sum_{j=1}^{d}\phi_n^{(j)}\right) = \frac{e^{-S_0/d}}{d}$.  
Thus,
\begin{equation}
\E[e^{-\phi_N^{(i)}}]
\le
e^{-\phi_0^{(i)}}+a_{\ell,\mathrm{disc}}N,
\label{eq:discrete-mab-negative-logit-moment}
\end{equation}
where $ a_{\ell,\mathrm{disc}}
:=
\frac{e^{-S_0/d}}{d}
\left(
\ell\Delta_{\max}+\frac{\ell^2\Gamma_1}{2}
\right)$.
Conservation also yields
\begin{equation}
\phi_N^{(1)}-\phi_0^{(1)}
=
\sum_{i=2}^d
(\phi_0^{(i)}-\phi_N^{(i)}).
\label{eq:discrete-mab-best-logit-conservation}
\end{equation}
Since the exponential factor in
\eqref{eq:discrete-mab-arm-dependent-exponential-moments} is at least one,
conditional Cauchy--Schwarz gives
$\E_n[|X_n|]\leq\sqrt{\Gamma_d}$.  The update
\eqref{eq:discrete-mab-actor-update} therefore implies, by induction over the
finite number of steps, that every coordinate of $\bm\phi_N$ is integrable.
For each $i\geq2$, Jensen's inequality and
\eqref{eq:discrete-mab-negative-logit-moment} imply
\begin{equation}
\begin{aligned}
\E[\phi_0^{(i)}-\phi_N^{(i)}]
&=
\E\left[
\log\left(
e^{\phi_0^{(i)}}e^{-\phi_N^{(i)}}
\right)
\right]\le
\log\left(
e^{\phi_0^{(i)}}
\E[e^{-\phi_N^{(i)}}]
\right)\le
\log\left(
1+a_{\ell,\mathrm{disc}}e^{\phi_0^{(i)}}N
\right).
\end{aligned}
\label{eq:discrete-mab-jensen-logit-bound}
\end{equation}
Summing \eqref{eq:discrete-mab-jensen-logit-bound} and using
\eqref{eq:discrete-mab-best-logit-conservation} gives
\begin{equation}
\E[\phi_N^{(1)}-\phi_0^{(1)}]
\le
\sum_{i=2}^d
\log\left(
1+a_{\ell,\mathrm{disc}}e^{\phi_0^{(i)}}N
\right).
\label{eq:discrete-mab-best-logit-bound}
\end{equation}

The conditional drift of the best coordinate is $ \E_n[\phi_{n+1}^{(1)}-\phi_n^{(1)}]
=
\ell
\sum_{a=1}^d
\pi_n^{(a)}(\mu^{(a)}-B_n)
(\one_{\{a=1\}}-\pi_n^{(1)})=
\ell\pi_n^{(1)}
(\mu^{(1)}-\bm\mu^\top\bm\pi_n)
=\ell\pi_n^{(1)}r_n.$
Therefore,
\begin{equation}
\E[\phi_N^{(1)}-\phi_0^{(1)}]
=
\ell
\E\left[
\sum_{n=0}^{N-1}\pi_n^{(1)}r_n
\right].
\label{eq:discrete-mab-best-coordinate-telescope}
\end{equation}
Let
$u_n:=1-\pi_n^{(1)}=\sum_{i=2}^d\pi_n^{(i)}$.
Then $r_n\le\Delta_{\max}u_n$ and, by Cauchy--Schwarz, $ u_n^2\le(d-1)\sum_{i=2}^d(\pi_n^{(i)})^2.$
Consequently, $ r_n
=\pi_n^{(1)}r_n+u_nr_n\le
\pi_n^{(1)}r_n
+
\Delta_{\max}(d-1)
\sum_{i=2}^d(\pi_n^{(i)})^2.$
Summing $ r_n$, taking expectations,
and applying \eqref{eq:discrete-mab-general-infinite-occupation},
\eqref{eq:discrete-mab-best-logit-bound}, and
\eqref{eq:discrete-mab-best-coordinate-telescope}, we conclude that
\begin{equation}
\mathcal R_N
\le
\frac1\ell
\sum_{i=2}^d
\log\left(
1+a_{\ell,\mathrm{disc}}e^{\phi_0^{(i)}}N
\right)
+
\frac{\Delta_{\max}(d-1)}
{\ell\Delta_{\min}}
V(\bm\phi_0).
\label{eq:discrete-mab-general-initial-regret}
\end{equation}

If $\bm\phi_0=\bm0$, then $S_0=0$, $ a_{\ell,\mathrm{disc}}
=\frac1d
\left(
\ell\Delta_{\max}+\frac{\ell^2\Gamma_1}{2}
\right),
\ V(\bm0)
=
\sum_{m=1}^{d-1}\binom{d-1}{m}C^{m-1}
=\frac{(1+C)^{d-1}-1}{C}.$
This proves
\eqref{eq:discrete-mab-anytime-regret-bound}.   \Halmos
\end{proof}

\begin{remark}
\label{rmk:discrete-mab-exponential-moment-extension}
The zero initialization in
Theorem~\ref{thm:discrete-mab-horizon-independent-learning-rate} only
simplifies the constants.  For any finite deterministic $\bm\phi_0$, the same
conclusions hold with \eqref{eq:discrete-mab-general-infinite-occupation} and
\eqref{eq:discrete-mab-general-initial-regret}.  When $Y_n,B_n\in[0,1]$, the
exact condition is $\ell d e^{\ell d}\leq\Delta_{\min}$, while
$\ell\leq\Delta_{\min}/(2d)$ is a simpler sufficient condition. 
\end{remark}

\begin{remark}
\label{rmk:discrete-mab-gaussian-rewards}
Suppose that $B_n=0$ and, conditionally on $\mathcal F_n$ and $A_n=a$,
$Y_n\sim\mathcal N(\mu^{(a)},s_a)$, where
$s_a=(\sigma^{(a)})^2$.  Define $ \nu_{G,\bm\sigma}
:=
2\max_{1\leq a\leq d}
\left\{
\left((|\mu^{(a)}|+s_a)^2+s_a\right)
e^{|\mu^{(a)}|+s_a/2}
\right\}.$
Then all conclusions of
Theorem~\ref{thm:discrete-mab-horizon-independent-learning-rate} hold whenever
$\ell\leq\min\{1/d,\Delta_{\min}/(d\nu_{G,\bm\sigma})\}$.  More generally, any
predictable baseline satisfying
\eqref{eq:discrete-mab-arm-dependent-exponential-moments} is admissible.  If
$|B_n|\leq B_{\max}$, the same conclusion follows after replacing
$|\mu^{(a)}|$ above by $|\mu^{(a)}|+B_{\max}$.
\end{remark}

\section{Concluding Remarks}
\label{sec:conclusion}
Despite we work under the Brownian diffusions, our analysis is based on constructing Lyapuov functions; hence, it is possible to extend our framework and method to more general stochastic processes. The analysis based on SDEs presented in this paper may open up many interesting research questions for a complete understanding of online policy gradient in diffusion environment. Beyond MABs, the policy gradient can also be applied to contextual bandits and general stochastic control problems. It would be interesting to examine the performance of policy gradient in these more general problems.

The success of policy gradient suggests the potential of ``model-free" learning that only relies on the optimization principle and bypasses statistical principles in devising algorithms. In MAB, the exploration-exploitation tradeoff is naturally entailed by the policy gradient update without further adjustment.  Whereas, the ``instance-free" regret bound still remains an open problem and the order of our regret upper bound in terms of the number of arms $d$ is larger than the typical regret bound for MAB.

\section*{Acknowledgments}
We thank Xuefeng Gao and Jiacheng Zhang for helpful discussion at the early stage of this work. We especially thank Shuaijie Qian for contributing an elegant alternative proof for the two-arm case during our discussion, which is, however, not reflected in this paper. All errors are our own. 

\bibliographystyle{informs2014}
\bibliography{ref}

\ECSwitch


\ECHead{Electronic Companion}
\section{MAB as A Stochastic Control Problem}
\label{sec:problem-formulation}

We formulate the multi-armed bandit under the continuous-time stochastic control
framework. The decision maker (DM) chooses from a collection of arms $\mathcal A=\{1,\ldots,d\}$. Arm
$a\in\mathcal A$ generates cumulative reward $R_t^{(a)}$ satisfying the following stochastic differential equation (SDE):
\begin{equation}
\dd R_t^{(a)}
=
\mu^{(a)}\,\dd t+\sigma^{(a)}\,\dd B_t^{(a)},
\qquad a\in\mathcal A,
\label{eq:prelim-arm-reward}
\end{equation}
where $\bm\mu=(\mu^{(1)},\ldots,\mu^{(d)})^\top \in \mathbb R^d$ and $\bm\sigma = (\sigma^{(1)},\ldots,\sigma^{(d)})^\top \in \mathbb R^d$ stand for the reward rate and volatility of the reward flow, and
$\bm B=(B^{(1)},\ldots,B^{(d)})^\top \in \mathbb R^d$ is a standard Brownian
motion. 

A classical control problem is to choose an action process $A_t\in\mathcal A$, to maximize the long-run average of the cumulative reward
$R^A$ of the selected arms:
\begin{equation}
\sup_{A}
\liminf_{T\to\infty}
\frac{1}{T}\E[R_T^A] \text{ subject to } \dd R_t^A
=
\sum_{a=1}^d
\one_{\{A_t=a\}}\,\dd R_t^{(a)},\ R^A_0 = 0.
\label{eq:mab-controlled-reward}
\end{equation}

The bandit problem is nontrivial because $\bm\mu$ is unknown and must be learned only from the rewards generated by the selected arms. We follow the model-free continuous-time reinforcement-learning framework by \citet{wang2020reinforcement} and the actor-critic learning in \citet{jia2022policypgac} to showcase how stochastic controls in unknown environment can be attacked systematically.\footnote{A partially observed control formulation (cf. \citealt{elliott1995hidden}) could append a filter for the unknown means, but it would require a specified likelihood model and would generally enlarge the state space, causing difficulty in solving the augmented control problem.}

\subsection{Continuous-time actor--critic learning via relaxed controls}
\label{sec:mab-preliminaries}

\citet{wang2020reinforcement} suggest a relaxed control framework to describe DM's learning via trial and error. In particular, DM follows a stochastic policy $\bm\pi_t
=
(\pi_t^{(1)},\ldots,\pi_t^{(d)})^\top
\in\mathcal P^d$, a probability vector that describes the choice probability, and considers a relaxed control problem
\citep{fleming1984stochastic}:
\begin{equation}
\sup_{\bm\pi}\liminf_{T\to\infty} \frac{1}{T}
\E\left[
\widetilde R_T
\right],\text{ subject to } \dd\widetilde R_t
=
\bm\mu^\top\bm\pi_t\,\dd t+ \sqrt{(\bm\sigma^2)^\top \bm\pi_t}\dd B_t^R,\ \widetilde R_0 = 0
\label{eq:mab-relaxed-objective}
\end{equation}
where $B^R$ is a standard Brownian motion, and $\bm\sigma^2\in \mathbb R^d$ stands for the entry-wise square. $\widetilde R_t$ reflects the ``average" trajectory of the accumulative reward by ``integrating out" the randomness in choosing $a_t\sim \bm\pi_t$ ``almost continuously in time". See
\citet{jia2026accuracy} for rigorous account of the relations between SDEs in \eqref{eq:mab-controlled-reward} and \eqref{eq:mab-relaxed-objective}. The relaxed control problem \eqref{eq:mab-relaxed-objective} by itself is still a trivial control
problem. Its formal (ergodic) Hamilton--Jacobi--Bellman equation is
\[
\max_{\bm\pi\in\mathcal P^d}
\left\{
(\bm\mu^\top\bm\pi)V^{\star'}(R)
+
\frac{1}{2}(\bm \sigma^2)^{\top}\bm \pi V^{\star''}(R)
+
\bm\mu^\top\bm\pi-\beta^\star
\right\}
=0,
\]
where the solution is given by $V^\star\equiv0$,
$\beta^\star=\max_{a\in \mathcal A}\mu^{(a)}$, and the optimal policy is to always choose optimal arms.
Whereas, adopting relaxed controls is not to study a new control problem, instead, \citet{jia2022policy,jia2022policypgac,jia2023q} demonstrate that principled model-free learning algorithms can be devised for relaxed stochastic controls. We specialize the online
policy-gradient actor--critic method of \citet{jia2022policypgac} to the MAB.

For this actor-critic learning procedure, one needs to parameterize the value function and the policy separately. Since $V^\star \equiv 0$ is common knowledge that does not depend on $\bm\mu$, we only need to learn the long-run average reward,
represented by a scalar parameter $\beta$. The policy is commonly parameterized by the logit function
\begin{equation*}
\pi^{(a)}(\bm\phi)
=
\frac{\exp(\phi^{(a)})}
{\sum_{j=1}^d\exp(\phi^{(j)})},
\text{ with }
\bm\pi(\bm\phi)
=
(\pi^{(1)}(\bm\phi),\ldots,\pi^{(d)}(\bm\phi))^\top\in \mathcal P^d,\ \bm\phi=(\phi^{(1)},\ldots,\phi^{(d)})^\top\in \mathbb R^d .
\label{eq:prelim-softmax-policy}
\end{equation*}

The derivation in \citet{jia2022policypgac} suggests the learning scheme via stochastic approximation to solve the following martingale conditions and the first order conditions of the policy gradient:
\[ \left\{ \begin{aligned}
& \E\left[ \int (\dd R^A_t + \dd V^\star_t - \beta_t\dd t) \right]  = 0 \text{ (for learning $\beta$) }   \\
& \E\left[ \int \nabla_{\bm\phi}\log\pi^{(A_t)}(\bm\phi_t)(\dd R^A_t + \dd V^\star_t - \beta_t\dd t)  \right] = 0 \text{ (for learning $\bm\phi$) } 
\end{aligned}\right. . \]
Thus, the online, incremental learning can be informally represented by \eqref{eq:mab-continuous-actor-critic}. A common choice is to take the learning rates as $\alpha_t = 1/(t+\eta)$ and $\ell_t \equiv \ell$. Under this choice, we can solve $\beta_t$ in \eqref{eq:mab-continuous-actor-critic} explicitly as
$\beta_t=(R_t^A + \beta_0\eta)/(t+\eta)$; thus $\beta_t$ can be interpreted as the approximate average-reward, and it is used as a natural ``benchmark" in the update of $\bm\phi_t$.

The policy gradient update for $\bm\phi_t$ in \eqref{eq:mab-continuous-actor-critic} is only informal in continuous-time because continuous independent sampling is infeasible. Its implementation is summarized in E-Companion \ref{sec:code}. Following \citet{wang2020reinforcement} and \citet{jia2026accuracy}, we ``integrate out" the impact of randomization of action $a_t\sim \bm\pi_t$ with the high frequency sampling, and the statistical properties of $\bm\phi_t$ is fully characterized by the aggregated SDE \eqref{eq:prelim-aggregated-sde}. We present an intuitive derivation of \eqref{eq:prelim-aggregated-sde} in the E-Companion \ref{sec:intuitive aggregated sde}, and such a derivation is rigorously proved as the weak limit of the high-frequency sampling in \citet{jia2026accuracy}.

\section{Diffusion Limit of the Discrete-Time Policy Gradient Algorithm}
\label{subsec:mab-discrete-implementation}

To connect the continuous-time framework with the conventional discrete-time policy gradient (e.g., \citealt{mei2023stochastic}), we consider the latter in the ``diffusion limit" \citep{fan2021diffusion,kuang2024weak}. To be more precise, upon every draw, arm $a$ generates reward $r_k^{(a),(n)} = \mu^{(a)} + \sqrt{n} \xi_k^{(a)}$, where $n$ is a scaling parameter that measures the noise to signal ratio of the reward, and $\xi_k^{(a)}$ are i.i.d. noise whose distribution is irrelevant of $n$. 

The conventional discrete-time policy gradient updates the policy (with the same logit parameterization and a constant learning rate $\ell$) via:
\begin{equation}
\begin{aligned}
R_{k+1}^{(n)}
&=
R_k^{(n)}+ \Delta R_{k+1}^{(n)} = R_k^{(n)} + r_k^{(A_k^{(n)}),(n)},\ R_{0}^{(n)} = 0, \\
\bm\phi_{k+1}^{(n)}
&=
\bm\phi_k^{(n)}
+
\ell\bigl(\bm e_{A_k^{(n)}}-\bm\pi_k^{(n)}\bigr)
\bigl(\Delta R_{k+1}^{(n)}-\beta_k^{(n)}\bigr),
\end{aligned}
\label{eq:mab-implemented-pg}
\end{equation}
where $\bm\pi_k^{(n)}=\bm\pi(\bm\phi_k^{(n)}/n)$, and $A_k^{(n)}\sim \bm\pi_k^{(n)}$ is an independent random draw. 

Let
$(\widetilde{\bm\phi}_t^{(n)},\widetilde{R}_t^{(n)})$ be scaled piecewise-constant interpolation of this recursion, defined as
\[\widetilde{\bm\phi}_t^{(n)} = \frac{1}{n} \bm\phi_{\lfloor{t n\rfloor}}^{(n)},\ \widetilde{R}_t^{(n)} = \frac{1}{n} R_{\lfloor{t n\rfloor}}^{(n)} .  \]

\begin{theorem}
\label{thm:mab-implemented-pg-weak-limit}
Fix $ T>0$. Let
$\mathcal F_k^{(n)}$ contain the history before round $k$, so that
$\bm\phi_k^{(n)}$, $R_k^{(n)}$, and
$\beta_k^{(n)}$ are $\mathcal F_k^{(n)}$-measurable, and let
$\mathcal F_{k+1}^{(n)}$ contain the sampled action and observed innovation.
Suppose that
$(\widetilde{\bm\phi}_0^{(n)},\widetilde R_0^{(n)})
=(\bm\phi_0,0)$ is deterministic and independent of $n$.  Write
$s_a=(\sigma^{(a)})^2$ and
$\E_{k,a}^{(n)}[\cdot]
:=\E[\cdot\mid\mathcal F_k^{(n)},A_k^{(n)}=a]$.  Assume, uniformly in
$n,k$, and $a$,
\begin{equation}
\E_{k,a}^{(n)}[\xi_k^{(a)}]=0,
\ \
\E_{k,a}^{(n)}[(\xi_k^{(a)})^2]=s_a,
\ \
\E_{k,a}^{(n)}[|\xi_k^{(a)}|^4]\le C_\xi, \ \ \sup_{n\ge1}\max_{0\le k\le\lfloor nT\rfloor}
\E\left[|\beta_k^{(n)}|^4\right]<\infty.
\label{eq:mab-implemented-baseline-condition}
\end{equation}
Then the following statements hold.

\emph{(i) Weak convergence.}  As $n\to\infty$,
\begin{equation}
(\widetilde{\bm\phi}^{(n)},\widetilde{R}^{(n)})
\ \Longrightarrow\
(\bm\phi,\widetilde R)
\quad\text{in}\quad
D([0,T];\mathbb R^{d+1}),
\label{eq:mab-implemented-joint-convergence}
\end{equation}
where $D([0,T];\mathbb R^{d+1})$ is the Skorokhod space and
$(\bm\phi,\widetilde R)$ is the unique (weak) solution of \eqref{eq:prelim-aggregated-sde} and
\eqref{eq:mab-relaxed-objective} with $\dd \bm B^{\bm\phi}_t \dd B^R_t = \frac{1}{\sqrt{(\bm\sigma^2)^\top \bm\pi(\bm\phi_t)}}\left( \sqrt{\pi^{(1)}(\bm\phi_t)\sigma^{(1)^2}},\cdots, \sqrt{\pi^{(d)}(\bm\phi_t)\sigma^{(d)^2}} \right)^\top$.  

\emph{(ii) Weak convergence order.}  For every
$f\in C_b^4(\mathbb R^{d+1})$, i.e., a fourth-time continuously differentiable function whose up to fourth-time order derivatives are bounded,  there is a constant $C_{f,T}<\infty$,
independent of $n$, such that
\begin{equation}
\sup_{0\le t\le T}
\left|
\E f(\widetilde{\bm\phi}_t^{(n)},\widetilde{R}_t^{(n)})
-
\E f(\bm\phi_t,\widetilde R_t)
\right|
\le
C_{f,T}n^{-1/2}.
\label{eq:mab-implemented-half-order-weak-rate}
\end{equation}
If, in addition,
\begin{equation}
\E_{k,a}^{(n)}[(\xi_k^{(a)})^3]=0
\qquad\text{for all }n,k,a,
\label{eq:mab-implemented-third-moment-matching}
\end{equation}
then the weak error is first order:
\begin{equation}
\sup_{0\le t\le T}
\left|
\E f(\widetilde{\bm\phi}_t^{(n)},\widetilde R_t^{(n)})
-
\E f(\bm\phi_t,\widetilde R_t)
\right|
\le
C_{f,T}n^{-1}.
\label{eq:mab-implemented-first-order-weak-rate}
\end{equation}
\end{theorem}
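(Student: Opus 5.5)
We prove Theorem~\ref{thm:mab-implemented-pg-weak-limit} with the standard diffusion-approximation machinery. We compute one-step conditional moments of the scaled chain, check tightness, identify the limit through a martingale problem, and obtain the weak rates from a Kolmogorov backward equation (Lindeberg/telescoping) argument.

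\emph{Conditional moments.} Write $\Delta\widetilde{\bm\phi}_k=\frac{\ell}{n}(\bm e_{A_k}-\bm\pi_k)(\mu^{(A_k)}+\sqrt n\,\xi_k^{(A_k)}-\beta_k^{(n)})$ and $\Delta\widetilde R_k=\frac1n(\mu^{(A_k)}+\sqrt n\,\xi_k^{(A_k)})$. Averaging over $A_k\sim\bm\pi_k$ and using $\E_{k,a}\xi=0$, the baseline cancels exactly because $\sum_a\pi^{(a)}(\bm e_a-\bm\pi)=\bm 0$. The same cancellation appears in \eqref{eq:discrete-mab-baseline-cancellation}.

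1. The drift per step is therefore $\frac{\ell}{n}\bm J(\bm\pi)\bm\mu$ for $\widetilde{\bm\phi}$ and $\frac1n\bm\mu^\top\bm\pi$ for $\widetilde R$.
2. The second moments are $\frac{\ell^2}{n}\sum_a\pi^{(a)}s_a(\bm e_a-\bm\pi)(\bm e_a-\bm\pi)^\top+O(n^{-2})(1+\beta^2)$. This equals $\frac1n\ell^2\bm Q_{\bm\sigma}(\bm\pi)$ up to higher order.
3. The cross term with $\widetilde R$ is $\frac{\ell}{n}\sum_a\pi^{(a)}s_a(\bm e_a-\bm\pi)$, and the $R$-variance is $\frac1n(\bm\sigma^2)^\top\bm\pi$.
4. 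These moments match exactly the generator $\mathcal L$ of the pair \eqref{eq:prelim-aggregated-sde}--\eqref{eq:mab-relaxed-objective} with the stated correlation. One checks that $\ell\bm G_{\bm\sigma}\,\dd\bm B^\phi\,\dd\widetilde R$ reproduces item 3.
5. The fourth moments are $O(n^{-2})$, by the bound $C_\xi$ and the uniform $L^4$ bound on $\beta$.

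The softmax uses $\bm\phi_k^{(n)}/n=\widetilde{\bm\phi}$, so all coefficients are smooth and bounded functions of $\widetilde{\bm\phi}$ with bounded derivatives.

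\emph{(i) Weak convergence.} Tightness in $D$ follows from Aldous' criterion: the increments over $[t,t+\delta]$ have second moment $O(\delta+1/n)$, uniformly, by the moment bounds above. For any $f\in C_c^\infty$, the process $f(\widetilde X_t^{(n)})-\sum_{k<\lfloor nt\rfloor}\E_k[f(\widetilde X_{k+1})-f(\widetilde X_k)]$ is a martingale. Taylor expansion to third order together with the moment computation gives $\E_k[\cdots]=\frac1n\mathcal Lf(\widetilde X_k)+O(n^{-3/2})(1+|\beta_k|^3)$, with errors vanishing in $L^1$. Hence every limit point solves the martingale problem for $\mathcal L$. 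The coefficients are Lipschitz (Lemma~\ref{lem:mab-wellposedness}) and the $R$-equation is driven by a $\bm\phi$-measurable integrand, so the martingale problem is well posed. This yields \eqref{eq:mab-implemented-joint-convergence}.

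\emph{(ii) Weak convergence order.} Set $u(t,x)=\E f(X^{x}_{t'})$, with $t'$ the target time, which solves $\partial_tu+\mathcal Lu=0$. Because the coefficients are $C^\infty$ with bounded derivatives (derivatives of the softmax are bounded, and $\sqrt{\pi^{(a)}}$ enters only through $\bm Q_{\bm\sigma}$, which is smooth), $u\in C_b^{4}$ uniformly on $[0,T]$. This is the step I expect to be the main obstacle. The plan is to write the generator in terms of $\bm Q_{\bm\sigma}$ and differentiate the flow; the non-smooth square root $\bm G_{\bm\sigma}$ should be avoided, relying on the law depending only on $\bm Q_{\bm\sigma}$.

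The weak error then telescopes as $\sum_k\E[u(t_{k+1},\widetilde X_{k+1})-u(t_k,\widetilde X_k)]$. Expanding to fourth order, the first- and second-order terms cancel against $\frac1n\mathcal L u$ up to $O(n^{-2})$. The third-order term is $\frac{\ell^3}{n^{3/2}}\E_{k,a}[\xi^3]\cdot(\text{bounded})$, plus $O(n^{-2})$ terms involving $\beta$, and the remainder is $O(n^{-2})$ by the fourth moments. Summing over $nT$ steps gives $O(n^{-1/2})$, which proves \eqref{eq:mab-implemented-half-order-weak-rate}. Under \eqref{eq:mab-implemented-third-moment-matching} the $n^{-3/2}$ term vanishes and the remaining error is $O(n^{-1})$, which proves \eqref{eq:mab-implemented-first-order-weak-rate}. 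The piecewise-constant interpolation between grid points contributes an extra $O(n^{-1})$.
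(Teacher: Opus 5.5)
Your proposal is correct. The moment computation and part (ii) match the paper's proof. The paper also telescopes with $g_k=P_{t_m-t_k}f$ and expands to fourth order. It isolates the third-moment term $h^{3/2}\sum_a p^{(a)}\E_{k,a}^{(n)}[(\xi_k^{(a)})^3]\bm w_a^{\otimes3}$ as the only contribution not of order $h^2$. It controls the mismatch between $\frac1n\mathcal Lu$ and $\int_{t_k}^{t_{k+1}}\mathcal Lu\,\dd s$ by $\|\mathcal L^2P_rf\|_\infty\le C_{f,T}$, which your ``cancel up to $O(n^{-2})$'' uses implicitly.

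Part (i) is where you differ. The paper does not check tightness or identify limit points by hand. It cites Isp\'any--Pap, Corollary~2.2 (the step-process case of Jacod--Shiryaev, Theorem~IX.3.39), whose hypotheses are exactly the drift, covariance and conditional-Lindeberg estimates you already derived. It then invokes uniqueness for the bounded Lipschitz SDE. Your Aldous-plus-martingale-problem argument is essentially the proof of that theorem: it is self-contained but longer, whereas the citation is shorter.

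One correction to your plan for (ii): the obstacle you flag is not there. As a function of $\bm\phi$, $\sqrt{\pi^{(a)}(\bm\phi)}=e^{\phi^{(a)}/2}/\sqrt{Z(\bm\phi)}$ satisfies $\nabla\sqrt{\pi^{(a)}}=\tfrac12\sqrt{\pi^{(a)}}(\bm e_a-\bm\pi)$. Every higher derivative is $\sqrt{\pi^{(a)}}$ times a polynomial in $\bm\pi$. Hence $\bm G_{\bm\sigma}(\bm\pi(\cdot))$ and the $\widetilde R$-row $(\sigma^{(a)}\sqrt{\pi^{(a)}})_a$ are $C_b^\infty$, and you can differentiate the flow of the SDE driven by this explicit square root directly. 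This is what the paper means by ``$C_b^\infty$ coefficients''. It is also consistent with your own use of the Lipschitz property of these coefficients in (i).

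Avoiding $\bm G_{\bm\sigma}$ and working only with $\bm Q_{\bm\sigma}$ would be the harder route. The joint diffusion matrix is degenerate (it annihilates $(\bm e^\top,0)^\top$). General results only give a Lipschitz square root of a $C^2$ nonnegative matrix, and without a smooth square root the flow-differentiation argument you describe is not available.
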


\begin{proof}{Proof.}
Put $h=1/n$, $t_k=kh$,
$\bm X_k^{(n)}
=(n^{-1}(\bm\phi_k^{(n)})^\top,n^{-1}R_k^{(n)})^\top$, and
$\bm X_t^{(n)}=\bm X_{\lfloor nt\rfloor}^{(n)}$.  Write
$\E_k[\cdot]=\E[\cdot\mid\mathcal F_k^{(n)}]$ and
$\bm p=\bm\pi(\widetilde{\bm\phi}_{k/n}^{(n)})$.  For
$\bm u_a=\bm e_a-\bm p$, set
$\bm w_a=(\ell\bm u_a^\top,1)^\top$ and
$\bm d_{a,k}
=(\ell(\mu^{(a)}-\beta_k^{(n)})\bm u_a^\top,\mu^{(a)})^\top$.
Then the scaled recursion has the exact increment
\begin{equation}
\Delta\bm X_{k+1}^{(n)}
=
h\bm d_{A_k^{(n)},k}
+
\sqrt h\,\bm w_{A_k^{(n)}}\xi_k^{(A_k^{(n)})}.
\label{eq:appendix-implemented-scaled-increment}
\end{equation}

For $\bm p\in\mathcal P^d$, define
\[
\bm b(\bm p)
:=
\begin{pmatrix}
\ell\bm J(\bm p)\bm\mu\\
\bm\mu^\top\bm p
\end{pmatrix},
\qquad
\bm\Gamma(\bm p)
:=
\begin{pmatrix}
\ell\bm G_{\bm\sigma}(\bm p)\\
\bm g_{\bm\sigma}(\bm p)^\top
\end{pmatrix},
\qquad
\bm a(\bm p):=\bm\Gamma(\bm p)\bm\Gamma(\bm p)^\top .
\]
The identity $\sum_a p^{(a)}\bm u_a=\bm0$ cancels the baseline in the
conditional drift, and direct conditioning gives
\begin{align}
\E_k[\Delta\bm X_{k+1}^{(n)}]
&=h\bm b(\bm p),
\label{eq:appendix-implemented-first-moment}\\
\operatorname{Cov}_k(\Delta\bm X_{k+1}^{(n)})
&=h\bm a(\bm p)+h^2\bm R_{k,n},
\qquad
\|\bm R_{k,n}\|
\le C(1+|\beta_k^{(n)}|^2),
\label{eq:appendix-implemented-second-moment}\\
\E_k\!\left[\|\Delta\bm X_{k+1}^{(n)}\|^4\right]
&\le Ch^2(1+|\beta_k^{(n)}|^4).
\label{eq:appendix-implemented-fourth-moment}
\end{align}
where
$\bm R_{k,n}
=\sum_a p^{(a)}\bm d_{a,k}\bm d_{a,k}^\top
-\bm b(\bm p)\bm b(\bm p)^\top$.  Moreover,
\begin{equation}
\bm a(\bm p)
=
\begin{pmatrix}
\ell^2\bm Q_{\bm\sigma}(\bm p)
&
\ell\bm J(\bm p)\bm s\\
\ell\bm s^\top\bm J(\bm p)
&
\bm s^\top\bm p
\end{pmatrix},
\qquad
\bm s=(s_1,\ldots,s_d)^\top,
\label{eq:appendix-implemented-joint-covariance}
\end{equation}
Thus $\bm b$ and $\bm a$ are the drift and covariance of the following SDE
\begin{equation}
\begin{aligned}
\dd\bm\phi_t
&=
\ell\bm J(\bm\pi_t)\bm\mu\,\dd t
+
\ell\bm G_{\bm\sigma}(\bm\pi_t)\,\dd\bm W_t,\\
\dd\widetilde R_t
&=
\bm\mu^\top\bm\pi_t\,\dd t
+
\bm g_{\bm\sigma}(\bm\pi_t)^\top\,\dd\bm W_t,
\quad
\bm\pi_t=\bm\pi(\bm\phi_t),\quad (\bm\phi_0,\widetilde R_0)
=
(\bm\phi_0,0),
\end{aligned}
\label{eq:mab-implemented-joint-limit-sde}
\end{equation}
where $ \bm W$ is a standard $d$-dimensional Brownian motion. 
In particular, the actor--reward
cross-covariance is $\ell\bm J(\bm p)\bm s$, which vanishes for common
volatility since then $\bm s\propto\bm e$.

Apply \citet[Corollary~2.2]{ispany2010note}, the random-step-process
specialization of \citet[Theorem~IX.3.39]{jacod2003limit}, with initial term
$\bm X_0^{(n)}$ and increments
$\bm U_{k+1}^{(n)}=\Delta\bm X_{k+1}^{(n)}$.  Put
$N_t=\lfloor nt\rfloor$,
$\bm b_s^{(n)}
=\bm b(\bm\pi(\widetilde{\bm\phi}_s^{(n)}))$, and
$\bm a_s^{(n)}
=\bm a(\bm\pi(\widetilde{\bm\phi}_s^{(n)}))$.
Equations \eqref{eq:appendix-implemented-first-moment},
\eqref{eq:appendix-implemented-second-moment},
\eqref{eq:appendix-implemented-fourth-moment}, and
\eqref{eq:mab-implemented-baseline-condition} give the three required estimates:
\[
\begin{aligned}
\sup_{t\le T}
\left\|\sum_{k<N_t}\E_k[\Delta\bm X_{k+1}^{(n)}]
-\int_0^t\bm b_s^{(n)}\,\dd s\right\|
&\le Ch,\\
\E\sup_{t\le T}
\left\|\sum_{k<N_t}\operatorname{Cov}_k(\Delta\bm X_{k+1}^{(n)})
-\int_0^t\bm a_s^{(n)}\,\dd s\right\|
&\le C_T h,\\
\E\sum_{k<N_T}\E_k\!\left[
\|\Delta\bm X_{k+1}^{(n)}\|^2
\one_{\{\|\Delta\bm X_{k+1}^{(n)}\|>\varepsilon\}}
\right]
&\le\varepsilon^{-2}\sum_{k<N_T}
\E\|\Delta\bm X_{k+1}^{(n)}\|^4
\le C_{\varepsilon,T}h .
\end{aligned}
\]
Thus the three conditions hold uniformly on $[0,T]$ in probability.  The
initial-state condition is exact.  Moreover,
$\bm b(\bm\pi(\cdot))$ and $\bm\Gamma(\bm\pi(\cdot))$ are bounded and globally
Lipschitz, so \eqref{eq:mab-implemented-joint-limit-sde} has a unique weak
solution by \citet[Chapter~5, Theorem~2.9]{karatzas1991brownian}.  Hence
\citet[Corollary~2.2]{ispany2010note} yields
\eqref{eq:mab-implemented-joint-convergence}.

For the weak orders, let $(P_r)_{r\ge0}$ and $\mathcal L$ be the semigroup and
generator of \eqref{eq:mab-implemented-joint-limit-sde}.  Its $C_b^\infty$
coefficients and standard backward-Kolmogorov regularity give, for
$f\in C_b^4$,
$\sup_{r\le T}(\|P_rf\|_{C_b^4}+\|\mathcal L^2P_rf\|_\infty)\le C_{f,T}$.
For a grid time $t_m=mh$, set $g_k=P_{t_m-t_k}f$.  Since
$g_m=f$ and $g_k=P_hg_{k+1}$,
\[
\E f(\bm X_m^{(n)})-P_{t_m}f(\bm X_0^{(n)})
=
\sum_{k=0}^{m-1}
\E\!\left[
\E_k g_{k+1}(\bm X_{k+1}^{(n)})
-P_hg_{k+1}(\bm X_k^{(n)})
\right].
\]
Thus, as in \citet[Theorem~4.1]{jia2026accuracy}, it remains to compare the
one-step moments of the present non-Gaussian recursion.  The remaining third
moment follows from \eqref{eq:appendix-implemented-scaled-increment}:
\begin{equation}
\E_k[(\Delta\bm X_{k+1}^{(n)})^{\otimes3}]
=
h^{3/2}
\sum_{a=1}^dp^{(a)}
\E_{k,a}^{(n)}[(\xi_k^{(a)})^3]\bm w_a^{\otimes3}
+
O\!\left(h^2(1+|\beta_k^{(n)}|^4)\right).
\label{eq:appendix-implemented-third-moment}
\end{equation}
Put $\bm x=\bm X_k^{(n)}$ and
$\bm\delta=\Delta\bm X_{k+1}^{(n)}$.  Conditional Taylor expansion gives
\[
\begin{aligned}
\E_k g(\bm x+\bm\delta)
={}&g(\bm x)
+Dg(\bm x)\cdot\E_k[\bm\delta]
+\frac12D^2g(\bm x):\E_k[\bm\delta^{\otimes2}]\\
&+\frac16D^3g(\bm x):\E_k[\bm\delta^{\otimes3}]
+O\!\left(\|g\|_{C_b^4}\E_k\|\bm\delta\|^4\right).
\end{aligned}
\]
Here ``$:$'' denotes full tensor contraction.  Since
$\E_k[\bm\delta^{\otimes2}]
=\operatorname{Cov}_k(\bm\delta)+(\E_k\bm\delta)^{\otimes2}$,
\eqref{eq:appendix-implemented-first-moment} and
\eqref{eq:appendix-implemented-second-moment} make the linear and quadratic
terms $h\mathcal Lg(\bm x)+O(h^2(1+|\beta_k^{(n)}|^2))$.  Meanwhile, the
semigroup identity
$P_hg-g-h\mathcal Lg
=\int_0^h(h-s)P_s\mathcal L^2g\,\dd s$
is $O(h^2)$.  Hence \eqref{eq:appendix-implemented-third-moment} is the only
possible $h^{3/2}$ contribution; it becomes
$O(h^2(1+|\beta_k^{(n)}|^4))$ under
\eqref{eq:mab-implemented-third-moment-matching}, while
\eqref{eq:appendix-implemented-fourth-moment} bounds the remainder at the same
order.  Uniformly for $g=P_rf$, $0\le r\le T$, we therefore have
\begin{equation}
\left|
\E_k g(\bm X_{k+1}^{(n)})
-
P_hg(\bm X_k^{(n)})
\right|
\le
C_{f,T}(1+|\beta_k^{(n)}|^4)
\begin{cases}
h^{3/2},&
\text{in general},\\
h^2,&
\text{under \eqref{eq:mab-implemented-third-moment-matching}},
\end{cases}
\label{eq:appendix-implemented-local-weak-error}
\end{equation}
which is the only use of the third-moment condition.  Summing at most $T/h$
terms and using \eqref{eq:mab-implemented-baseline-condition} gives grid-time
errors $C_{f,T}h^{1/2}$ and $C_{f,T}h$, respectively.  Between grid points
$\bm X^{(n)}$ is constant and
$|P_tf(\bm x)-P_sf(\bm x)|\le\|\mathcal Lf\|_\infty|t-s|$ for
$|t-s|\le h$.  Taking the supremum over $[0,T]$ and using $h=1/n$ proves
\eqref{eq:mab-implemented-half-order-weak-rate} and
\eqref{eq:mab-implemented-first-order-weak-rate}. \Halmos
\end{proof}

The weak convergence of the scaled discrete-time policy gradient algorithm is established using a semimartingale convergence theorem from \citet{jacod2003limit}. Similar diffusion limit of other classical bandit algorithms have been studied by \citet{fan2021diffusion,kuang2024weak} but it is new for the policy gradient. In addition to the weak convergence, we also obtain the rate of convergence in terms of bounded test functions, which is derived for a discrete sampling of SDEs in \citet{jia2026accuracy}.  The obtained aggregated SDE \eqref{eq:prelim-aggregated-sde} is consistent with the one suggested in \citet{lattimore2026diffusion}, thus, we provide a solid micro-foundation for this stochastic system and the framework by \citet{wang2020reinforcement,jia2022policypgac}. 

\section{Pseudo Code for Actor-Critic Algorithm For MAB}
\label{sec:code}
We summarize the implementation of the algorithm \eqref{eq:mab-continuous-actor-critic} by discrete sampling as Algorithm \ref{algo:mab-sampled-actor-critic}. It turns out to coincide with the conventional (discrete-time) policy gradient algorithm with a particular baseline.
\begin{algorithm}[H]
\caption{Online Actor--Critic Algorithm for Multi-Armed Bandit}
\label{algo:mab-sampled-actor-critic}
\begin{algorithmic}[1]
\STATE \textbf{Input:} intervention times
$0=t_0<t_1<\cdots$, actor learning rates $\ell_k$, and critic learning rates
$\alpha_k$.
\STATE Initialize $\bm\phi_0$, $\beta_0$, and $R_0=0$.
\FOR{$k=0,1,\ldots$}
\STATE Set $h_k=t_{k+1}-t_k$ and
$\bm\pi_k=\bm\pi(\bm\phi_k)$.
\STATE Draw $A_k\sim\bm\pi_k$, hold arm $A_k$ on
$[t_k,t_{k+1})$, and observe its reward increment $\Delta R_{k+1}$.
\STATE Update the actor using the pre-update critic,
\[
\bm\phi_{k+1}
=
\bm\phi_k
+
\ell_k(\bm e_{A_k}-\bm\pi_k)
\bigl(\Delta R_{k+1}-\beta_kh_k\bigr).
\]
\STATE Update the critic and accumulated reward,
\[
\beta_{k+1}
=
\beta_k+\alpha_k
\bigl(\Delta R_{k+1}-\beta_kh_k\bigr),
\qquad
R_{k+1}=R_k+\Delta R_{k+1}.
\]
\ENDFOR
\end{algorithmic}
\end{algorithm}



\section{Intuitive Derivation of Policy Gradient SDE \eqref{eq:prelim-aggregated-sde}}
\label{sec:intuitive aggregated sde}

For reader's convenience and pedagogical purpose, we use the heuristic argument in \citet{wang2020reinforcement} to demonstrate how to obtain the aggregated SDE \eqref{eq:prelim-aggregated-sde} from its informal counterpart \eqref{eq:mab-continuous-actor-critic}. The rigorous argument can be found in \citet{jia2026accuracy} for more general SDEs. 

The $\dd t$ term in $\dd \bm\phi_t$ in \eqref{eq:mab-continuous-actor-critic} is $ \ell_t \sum_{a=1}^d \one_{\{ A_t = a\}} (\bm e_{a} - \bm\pi(\bm\phi_t))  (\mu^{(a)}_t - \beta_t)$. We take the expectation of this term with respect to $A_t\sim \bm\pi(\bm\phi_t)$ conditioned on $\bm\phi_t,\beta_t$, we get 
\[ \begin{aligned}
& \E_{A_t\sim \bm\pi(\bm\phi_t)}\left[  \ell_t \sum_{a=1}^d \one_{\{ A_t = a\}} (\bm e_{a} - \bm\pi(\bm\phi_t))  (\mu^{(a)}_t - \beta_t) \right] \\
= & \ell_t \left[  \diag\{ \bm\pi(\bm\phi_t) \}(\bm\mu - \beta_t \bm e) - (\bm\pi(\bm\phi_t)^\top \bm\mu - \beta_t)\bm\pi(\bm\phi_t)  \right] = \ell_t \left[\diag\{ \bm\pi(\bm\phi_t) \} -  \bm\pi(\bm\phi_t)\bm\pi(\bm\phi_t)^\top \right] \bm \mu \\
= & \ell_t \bm J(\bm\pi_t) \bm\mu.  
\end{aligned}   \]

The $\dd B_t$ term in $\dd \bm\phi_t$ in \eqref{eq:mab-continuous-actor-critic} is $ \ell_t \sum_{a=1}^d \one_{\{ A_t = a\}} (\bm e_{a} - \bm\pi(\bm\phi_t)) \sigma^{(a)}_t\dd B^{(a)}_t$. We take the expectation to the quadratic variation of this term with respect to $A_t\sim \bm\pi(\bm\phi_t)$ conditioned on $\bm\phi_t,\beta_t$, we get  
\[ \begin{aligned}
& \E_{A_t\sim \bm\pi(\bm\phi_t)}\left[ \dd \bm\phi_t \dd \bm\phi_t^\top \mid \bm\phi_t,\beta_t  \right] \\
= & \ell_t^2\E_{A_t\sim \bm\pi(\bm\phi_t)}\left[ \sum_{a=1}^d \one_{\{A_t = a\}} \sigma^{(a)^2}(\bm e_a - \bm\pi(\bm\phi_t))(\bm e_a - \bm\pi(\bm\phi_t))^\top \right] \dd t \\
= & \ell_t^2 \left[ \diag\{ \bm\pi(\bm\phi_t) \}\diag\{\bm\sigma^2\}  - \diag\{ \bm\pi(\bm\phi_t) \} \bm\sigma^2    \bm\pi(\bm\phi_t)^\top - \bm\pi(\bm\phi_t) \bm\sigma^{2^\top}\diag\{ \bm\pi(\bm\phi_t) \}  + (\bm\pi(\bm\phi_t)^\top \bm\sigma^2) \bm\pi(\bm\phi_t) \bm\pi(\bm\phi_t)^\top \right]\dd t \\
= & \ell_t^2 (\bm I-\bm\pi\bm e^\top)
\diag\{\pi^{(1)} \sigma^{(1)^2},\ldots,\pi^{(d)}\sigma^{(d)^2}\}
(\bm I-\bm\pi\bm e^\top)^\top \dd t.
\end{aligned} \]

The expected drift and quadratic variation (taking the expectation with respect to $A_t$) of $\dd \bm\phi_t$ in \eqref{eq:mab-continuous-actor-critic} coincides that in the aggregated SDE \eqref{eq:prelim-aggregated-sde} with constant learning rate $\ell_t \equiv \ell$. The relations between SDEs in \eqref{eq:mab-controlled-reward} and \eqref{eq:mab-relaxed-objective} can be similarly obtained.

The above investigation only restricts to $\bm\phi_t$ and $R_t$ separately. We can further examine their cross variation by integrating out $A_t\sim \bm\pi(\bm\phi_t)$ conditioned on $\bm\phi_t,\beta_t$, that is,
\[ \begin{aligned}
& \E_{A_t\sim \bm\pi(\bm\phi_t)}\left[ \dd R^{A}_t \dd \bm\phi_t  \mid \bm\phi_t,\beta_t  \right] \\
= & \ell_t \E_{A_t\sim \bm\pi(\bm\phi_t)}\left[ \sum_{a=1}^d \one_{\{A_t = a\}} \sigma^{(a)^2}(\bm e_a - \bm\pi(\bm\phi_t)) \right] \dd t \\
= & \ell_t \left[ \diag\{\bm\pi(\bm\phi_t)\} \bm\sigma^2 - \bm\pi(\bm\phi_t)^\top\bm\sigma^2 \bm\pi(\bm\phi_t)\right] \dd t = \ell_t \bm J(\bm\pi_t) \bm\sigma^2 \dd t.
\end{aligned} \]

The cross-variation structure motivates the correlation between $\dd B_t^R$ and $\dd \bm B^{\bm\phi}_t$ in \eqref{eq:mab-relaxed-objective} and \eqref{eq:prelim-aggregated-sde}.

\section{Intuition from Two-Armed Bandit}
\label{sec:two arm}
To gain some intuition about the aggregated SDE \eqref{eq:prelim-aggregated-sde}, we look at the special case of two-armed bandit $d=2$. In this case, the policy can be represented by $\pi_t^{(1)} = \frac{e^{\phi_t^{(1)}}}{e^{\phi_t^{(1)}} + e^{\phi_t^{(2)}}} = \frac{e^{\phi_t^{(1)} - \phi_t^{(2)}}}{e^{\phi_t^{(1)} - \phi_t^{(2)}} + 1}$, and $\pi_t^{(2)} = \frac{1}{e^{\phi_t^{(1)} - \phi_t^{(2)}} + 1}$. Hence, it suffices to examine the property of $\delta\phi_t = \phi_t^{(1)} - \phi_t^{(2)}\in \mathbb R$, a scalar variable; or equivalently $\pi_t^{(1)} = \frac{e^{\delta \phi_t}}{e^{\delta\phi_t} + 1}$. 

Then \eqref{eq:prelim-aggregated-sde} reduces to
\[\begin{aligned}
\dd \delta\phi_t = & 2\ell \pi_t^{(1)}(1 - \pi_t^{(1)})(\mu^{(1)} - \mu^{(2)}) \dd t + 2\ell \sqrt{\pi_t^{(1)} (1 - \pi_t^{(1)})} \left( \sqrt{1-\pi_t^{(1)}}\sigma^{(1)},-\sqrt{\pi_t^{(1)}}\sigma^{(2)}  \right) \dd \bm B_t^{\bm\phi} \\
\stackrel{d}{=} & 2\ell \pi_t^{(1)}(1 - \pi_t^{(1)})(\mu^{(1)} - \mu^{(2)}) \dd t + 2\ell \sqrt{\pi_t^{(1)} (1 - \pi_t^{(1)})}\sqrt{(1-\pi_t^{(1)})\sigma^{(1)^2} + \pi_t^{(1)}\sigma^{(2)^2}}\dd B_t ,
\end{aligned}     \]
where ``$\stackrel{d}{=} $" means equality in distribution.

We notice that the direction of the drift in the parameter space ($\delta\phi_t$) is a constant and governed by the difference of drift $\mu^{(1)} - \mu^{(2)}$ and its magnitude is determined by the learning rate $\ell$. Since $\delta\phi_t$ is a scalar, the volatility part can be equivalently represented by a scalar Brownian motion $B_t$, and it also scales with the learning rate $\ell$. Such a structure does not immediately imply $\delta\phi_t$ would grow to infinity caused by the drift $\mu^{(1)} - \mu^{(2)}$ as such a force would be canceled with the term $\pi_t^{(1)}(1 - \pi_t^{(1)})$ which is diminishing as $\delta\phi_t$ approaches to infinity.  

Therefore, we have to examine the dynamic of the policy $\pi_t^{(1)}$ in the policy space. By It\^o's lemma, we have
\begin{equation}
\label{eq:policy sde two arm}
\begin{aligned}
\dd \pi_t^{(1)} = & \pi_t^{(1)}(1 - \pi_t^{(1)}) \dd \delta\phi_t + \frac{1}{2}\pi_t^{(1)}(1 - \pi_t^{(1)})(1-2\pi_t^{(1)})\dd \langle \delta\phi \rangle_t  \\
= & 2\ell \left( \pi_t^{(1)}(1 - \pi_t^{(1)}) \right)^2 (\mu^{(1)} - \mu^{(2)}) \dd t + 2\ell \left( \pi_t^{(1)} (1 - \pi_t^{(1)})\right)^{3/2}\sqrt{(1-\pi_t^{(1)})\sigma^{(1)^2} + \pi_t^{(1)}\sigma^{(2)^2}} \dd B_t \\
& + \underbrace{2\ell^2\left( \pi_t^{(1)}(1 - \pi_t^{(1)}) \right)^2(1- 2\pi_t^{(1)}) \left[(1-\pi_t^{(1)})\sigma^{(1)^2} + \pi_t^{(1)}\sigma^{(2)^2}  \right]  \dd t}_{\text{It\^o's correction term}} \\
= & \underbrace{2\ell \left( \pi_t^{(1)}(1 - \pi_t^{(1)}) \right)^2}_{\text{how fast learning vanishes}} \left[ \underbrace{\mu^{(1)} - \mu^{(2)}}_{\text{desired driving force}} + \underbrace{\ell (1- 2\pi_t^{(1)} ) \left( (1-\pi_t^{(1)})\sigma^{(1)^2} + \pi_t^{(1)}\sigma^{(2)^2}   \right)}_{\text{distortion in signal}} \right] \dd t \\
& + \underbrace{2\ell \left( \pi_t^{(1)} (1 - \pi_t^{(1)})\right)^{3/2}}_{\text{how fast volatility vanishes}} \underbrace{\sqrt{(1-\pi_t^{(1)})\sigma^{(1)^2} + \pi_t^{(1)}\sigma^{(2)^2}}}_{\text{bounded from below and above}} \dd B_t .
\end{aligned} 
\end{equation}

On the one hand, the leading direction of the dynamic of the policy $\pi_t^{(1)}$ is governed by two forces: the reward rate gap $\mu^{(1)} - \mu^{(2)}$, which is the desired signal indicating the better arm, and the distortion term $\ell (1- 2\pi_t^{(1)} ) \left( (1-\pi_t^{(1)})\sigma^{(1)^2} + \pi_t^{(1)}\sigma^{(2)^2}   \right)$, caused by the propagation of noises. Any limiting point of \eqref{eq:policy sde two arm}, if exists, must enforce the drift of $\pi_t^{(1)}$ to be 0. Besides two obvious absorbing points 0 and 1, the distortion term may cause another point of such an equilibrium point, that value $\hat\pi$ such that $\mu^{(1)} - \mu^{(2)} + \ell (1- 2\hat\pi) \left( (1-\hat\pi)\sigma^{(1)^2} + \hat\pi\sigma^{(2)^2}   \right) = 0$. If $\hat\pi\notin (0,1)$, then it does not affect the limiting behavior; otherwise if $\hat\pi\in (0,1)$, it becomes an undesired equilibrium point. Despite $\hat\pi$ is not an absorbing point because there is still random noise that will drive $\pi_t^{(1)}$ away from $\hat\pi$, it may still slow down the convergence rate of the process. The value of $\hat\pi$ can be controlled by the learning rate $\ell$. When $\ell$ is sufficiently small, it is guaranteed that $\hat\pi\notin (0,1)$, thus, we can expect faster convergence. This intuition is consistent with the small learning rate condition in Theorem \ref{thm:mab-heterogeneous-volatility-regret}.

On the other hand, between two absorbing points 0 and 1, whether it is attainable within a finite time is largely determined by the relative magnitude of the drift and volatility as $\pi_t^{(1)}$ approaches to 0 or 1. In the one-dimension case, there is the well-known Feller's test for explosion which describes a rate function that serves as the Lyapunov function to analyze the limiting behavioral, see, e.g., \citet[Chapter 5, Proposition 5.22]{karatzas1991brownian}. Applying the conclusion therein to \eqref{eq:policy sde two arm}, one can prove the almost sure convergence for arbitrary constant learning rate. This conclusion is a special case of Theorem \ref{thm:mab-heterogeneous-arbitrary-learning-rate-as-convergence}.


\end{document}